\documentclass{article}
\usepackage{hyperref}

\usepackage{microtype}
\usepackage{graphicx}
\usepackage{subfigure}
\usepackage{booktabs} 

\usepackage{microtype}
\usepackage{graphicx}
\usepackage{subfigure}
\usepackage{booktabs}
\usepackage{cuted}

\usepackage{amsfonts}
\usepackage{caption}
\usepackage{amsmath}
\usepackage{amssymb}
\usepackage{mathtools}
\usepackage{amsthm}
\usepackage{relsize}
\usepackage{thmtools} 
\usepackage{thm-restate}

\usepackage{float}
\usepackage{tabularx}
\usepackage{array}
\usepackage{enumerate}
\usepackage{courier}
\let\oldtexttt\texttt
\renewcommand{\texttt}[1]{{\footnotesize\oldtexttt{#1}}}

\usepackage[capitalize,noabbrev]{cleveref}

\usepackage[textsize=tiny]{todonotes}

\usepackage{amsfonts}       
\usepackage{amsmath}
\usepackage{amsthm}
\usepackage{amssymb}
\usepackage[mathscr]{euscript}

\usepackage{nicefrac}       
\usepackage{microtype}      




%
%

\usepackage{tikz}
\usepackage{adjustbox}

\newcommand{\Rset}{\mathbb R}

\newcommand{\ignore}[1]{}

\newtheorem{corollary}{Corollary}

\newcommand{\fusion}{\textsc{Fusion}}
\newcommand{\F}{\mathcal{F}}

\usepackage{enumitem}
\usepackage{ulem}
\usepackage{color}

\captionsetup[table]{skip=0.3cm}

\clubpenalty10000
\widowpenalty10000

\usepackage[accepted]{icml2024}

\definecolor{Darkblue}{rgb}{0,0,0.4}
\definecolor{Brown}{cmyk}{0,0.81,1.,0.60}
\definecolor{Purple}{cmyk}{0.45,0.86,0,0}

\hypersetup{colorlinks=true,
           citebordercolor={.6 .6 .6},linkbordercolor={.6 .6 .6},%
citecolor=Darkblue,urlcolor=black,linkcolor=red}
\newcommand{\lref}[2][]{\hyperref[#2]{#1~\ref*{#2}}}

\setlist[enumerate]{itemsep=0.5pt,parsep=0pt,before={\parskip=0pt},leftmargin=0.5cm}
\setlist[itemize]{itemsep=1.0pt,parsep=1.5pt,before={\parskip=1.5pt},leftmargin=0.5cm}

\begin{document}

\icmltitlerunning{Deep Fusion}

\twocolumn[

\icmltitle{Deep Fusion: Efficient Network Training via Pre-trained Initializations}

\begin{icmlauthorlist}
\icmlauthor{Hanna Mazzawi}{goog}
\icmlauthor{Xavi Gonzalvo}{goog}
\icmlauthor{Michael Wunder}{goog}
\icmlauthor{Sammy Jerome}{goog}
\icmlauthor{Benoit Dherin}{cloud}
\end{icmlauthorlist}

\icmlaffiliation{goog}{Google Research, New York, NY, USA}
\icmlaffiliation{cloud}{Google, Sunnyvale, CA, USA}

\icmlcorrespondingauthor{Hanna Mazzawi}{mazzawi@google.com}

\vskip 0.3in
]
\printAffiliationsAndNotice{}

\begin{abstract}
In recent years, deep learning has made remarkable progress in a wide range of domains, with a particularly notable impact on natural language processing tasks. One of the challenges associated with training deep neural networks in the context of LLMs is the need for large amounts of computational resources and time. To mitigate this, network growing algorithms offer potential cost savings, but their underlying mechanisms are poorly understood.
We present two notable contributions in this paper. First, we present Deep Fusion, an efficient approach to network training that leverages pre-trained initializations of smaller networks. Second, we propose a theoretical framework using backward error analysis to illustrate the dynamics of mid-training network growth.
Our experiments show how Deep Fusion is a practical and effective approach that not only accelerates the training process but also reduces computational requirements, maintaining or surpassing traditional training methods' performance in various NLP tasks and T5 model sizes.
Finally, we validate our theoretical framework, which guides the optimal use of Deep Fusion, showing that with carefully optimized training dynamics, it significantly reduces both training time and resource consumption.
\end{abstract}

\section{Introduction}
Large language models (LLMs) have significantly advanced the state of the art in various natural language processing  tasks, including text generation, translation, summarization, and question answering. However, training these models demands substantial amounts of data and computational resources. As a result, there has been a growing interest in developing efficient training methods to address the challenges associated with the high computational costs and energy consumption during the training process \cite{Narayanan21large_scale_llm}.

While some studies \cite{kaplan2020scaling, touvron2023llama, zhou2023lima} discuss that a balance of data and model size is important, it's undeniable that larger models often yield better performance \cite{chowdhery2022palm}. Several experiments and publications have demonstrated that as model size increases, the performance on various natural language processing tasks continues to improve \cite{devlin2018bert, radford2019language, brown2020language}. This trend is evident in the progression of LLMs, such as BERT, \mbox{GPT-2}, \mbox{GPT-3}, and PaLM where each successive generation is larger and achieves better results across a wide range of benchmarks~\cite{gu2021palm}.

Advancements in LLM efficiency have been driven by a variety of innovative techniques that enable faster training or inference without sacrificing performance. One such approach is model compression, which has been shown to reduce LLM size without significant loss in accuracy \cite{Ganesh21compress,zhang2022platon,kwon2022a}. Similarly, adaptive computation time methods have been proposed to dynamically allocate computational resources during LLM training, leading to improved efficiency \cite{graves2016adaptive}. Techniques such as layer-wise adaptive rate scaling (LARS) and layer-wise adaptive rate control (LARC) have demonstrated accelerated convergence in LLMs by adapting learning rates on a per-layer basis \cite{you2017scaling, you2017large}. Moreover, recent studies have explored the potential of mixed-precision training, where lower-precision computation is employed during the training process to speed up training and reduce memory requirements \cite{micikevicius2017mixed}.
One technique attracting more interest lately is network growing, which is based on increasing transformer size  during training. This approach has been shown to speed up training in various studies~\cite{shen2022staged,bert_pstacking,twice_faster,transformer_grow_p_bert,100_k_dollars}, while others report similar or even lower performance compared to standard training methods~\cite{refute_growing}. However, these methods are not well understood and add complexity to the already challenging process of training large language models, both in terms of implementation and correct deployment.

In addition to efficiency, a critical aspect to scale LLM training is distributed training, typically done with a mix of data and model parallelization. The former splits the training batch across accelerators (e.g., GPUs), while the latter splits the model operations across accelerators so that each accelerator computes part of the model.
While data parallelism alone is typically the easiest to implement, it is not well suited for very large models as it needs the whole model to fit in a single accelerator.  Model parallelism can be efficient, but it can be more difficult to implement as the dependency between accelerators’ input and outputs can lead to degraded performance.
%
In our research, we present two novel contributions to emphasize network growing for training efficiency as a primary goal. First, we introduce Deep Fusion, an efficient approach to network training that leverages pre-trained initializations of smaller networks. We employ a fusion operator to combine these smaller networks, promoting wide over-parameterization as a growing mechanism.
Deep Fusion serves as a new method to distribute training across accelerators. By enabling the growth from these smaller models, it allows for a modular and efficient distribution of parallel training before the fusion operation, effectively training smaller models for an initial portion of the training process.

Second, we introduce a new theoretical framework based on Backward Error Analysis (BEA) \cite{hairer2006} to analyze the implicit regularization the optimization process introduces during training right after the fusion operation takes place.
BEA has been shown to be an essential analysis tool in many settings, continual-learning~\cite{dherin2023implicit} being of special interest for this work.
The main result of the BEA framework is the decomposition of a modified loss into the original loss plus terms that are implicitly minimized during training. When we apply this analysis to Deep Fusion, it becomes clear that the key aspect of our approach lies in a specific interaction component, often referred to as a `Lie bracket' \cite{lee2012smooth_manifolds} in differential geometry. This component encapsulates the dynamic interplay between the gradients of the smaller and larger networks, highlighting how they influence and adjust to each other during the first step of the training process.

As we will see throughout the paper, Deep Fusion demonstrates significant reductions in both training time and resource consumption, while maintaining or improving generalization performance.

\subsection{Contribution}
As part of our research, this paper proposes:
\begin{itemize}
  \item A method that focuses on initializing large networks from training smaller networks, and employing fusion operators to combine them. This method promotes wide over-parameterization, which leads to improved efficiency in network training.
  
  \item An effective framework for the utilization of data and model parallelization techniques, as well as the strategic use of accelerator devices to train models of smaller size. This allows our approach to significantly reduce training time while increasing the performance of the resulting networks.

  \item A new theoretical framework to analyze dynamically growing wider network algorithms, backed with validation experiments.

  \item A downstream task evaluation with LLMs, demonstrating its effectiveness and efficiency in various scenarios.
\end{itemize}


\section{Related Work}\label{related}

In line with the lottery ticket hypothesis \cite{frankle2018lottery, frankle2019lottery}, our work shares the following belief: The most commonly used initialization schemes, primarily discovered heuristically \cite{glorot2010understanding, he2015delving}, are sub-optimal. 
While there's some evidence that over-parameterization may not be necessary during training \cite{nakkiran2020deep, belkin2019reconciling}, we still believe over-parameterization and a ``good'' initialization can yield better performance. Thus, we aim to actualize some of the potential that comes from finding a more principled initialization scheme.


From a transfer learning perspective, progressive networks \cite{rusu2016progressive} grow networks to address the problem of forgetting previous tasks. Another approach, deep model consolidation \cite{zhang2020classincremental}, uses a smaller pre-trained model to provide a better initialization for a larger model, which is then fine-tuned on a new task. Network morphism \cite{wei2016network} is another approach that aims to find a larger network by transforming a smaller, pretrained network while preserving the network function during the transformation. This is achieved by expanding the original network with layer-wise operations that preserve input-output behavior.

Similar to our method, staged training \cite{shen2022staged} also focuses on network efficiency. This approach involves defining a growth operator while preserving constraints associated with loss and training dynamics. By gradually expanding the model capacity, 
staged training allows for more efficient training. We argue that preserving training dynamics might not be the most effective approach when it comes to fusion. In fact, it could be counterproductive, and exploring high learning rate cycles could offer a preferable alternative. We validate this theoretically and show it empirically. 

BEA has been used in many settings to uncover inductive training biases of various optimizers (e.g. gradient descent \cite{igr}, SGD \cite{smith2021on}; momentum \cite{ghosh2023implicit}; Adam and RMSProp \cite{cattaneo2023implicit_bias_adam}), various architectures (e.g., GANs \cite{rosca2021discretization}; diffusion models \cite{gao2023diffusion_bea}), and various training settings like continual-learning \cite{dherin2023implicit} or distributed and federated learning \cite{barba2021federated}.





\section{Fusion}\label{fusion}
We start by demonstrating our \fusion\ operator on two fully connected layers before expanding to T5 transformers.



A generic neural network is a function $f \colon \Rset^d \to \Rset^k$ defined with $r$ layers. A layer $g_k: \Rset^{n} \to \Rset^{m}$ has weights in layer $k \in [r]$ $w_k \in \Rset^{n \times m}$ and biases being $b_k \in \Rset^{m}$. That is, for each layer $k$ we calculate
\begin{equation}
\label{eq:layer_architecture}
    a_k = g_k(a_{k-1}) = \phi_k( a_{k-1}w_k + b_k),
\end{equation}
where $a_0 = x$ is the input vector, and $\phi_k$ is the $k$-th activation function. 
In what follows, we will omit $a_k$ when it is clear from context.

The output of the neural network is defined as the composition of the $r$ layers,
\begin{equation}
    \label{eq:composition}
    f(x) = g_r \circ \ldots \circ g_2  \circ g_1(x).
\end{equation}

Our \fusion\ operator $\F$ takes two layers from two different models and generates a new layer by
composing their weights and biases. The fused layer has two characteristics:
\begin{itemize}
    \item \textbf{Fusion rule}: the fused layer maintains the same composition or architecture defined in Eq.\ref{eq:layer_architecture}. That is, we do not allow a change in the architecture, but rather a change in the dimensionality of the operations.
    \item  \textbf{Fusion property}: the fused layer calculates the concatenation of the the two original layers that are fused.
\end{itemize}

The fusion operator ($\F$) is defined as follows. Given two layers with $n$, $n'$ inputs and $m$, $m'$ outputs,
\begin{align}
    \F_w & \colon \Rset^{n\times m} \times \Rset^{n'\times m'} \to \Rset^{(n+n')\times(m+m')}, \\
    \F_b & \colon \Rset^{m} \times \Rset^{m'} \to \Rset^{(m+m')}.
\end{align}
The \fusion\ of the weights performed by $\F_w$ results in a new matrix where the weights of the layers
of the two models are located in the diagonal and the rest is set to zero. Similarly, the new bias is
simply the concatenation of the bias of the two layers being fused. So the new fused weight $w^{(f)}$ and new bias $b^{(f)}$ taking the weights of two layers,~$w$,~$w'$, and bias $b$, $b'$, respectively is defined as,
\begin{equation}
    w^{(f)} = \begin{pmatrix}
    w & \mathbf{0} \\
    \mathbf{0} & w'
    \end{pmatrix}, \quad\quad b^{(f)} = [b, b'],
    \label{eq:diagonal}
\end{equation}
where $\mathbf{0}$ is the zero matrix.
The output of the fused layer $k$ is defined as,
\begin{align*}
    g_k^{(f)} = & \F(g_k,g_k') = \phi_k(a_{k-1}^{(f)} \F_w(w_k,w_k') + \F_b(b_k,b_k')) \\\nonumber
    = & \phi_k\left([a_{k-1},a_{k-1}']\begin{pmatrix}
    w_k & \mathbf{0} \\ 
    \mathbf{0} & w_k'
    \end{pmatrix}  + [b_k,b_k']\right) \\\nonumber
    = & \phi_k([a_{k-1}w_k+b_k,a_{k-1}'w_k'+b_k'])=  [g_k,g_k']. 
\end{align*}
This means that the result of the \fusion\ operator on two layers is the concatenation of the outputs, that is $[h_k,h_k']$.

\subsection{Deep Fusion and Self Deep Fusion}
For two neural networks $f$ and $f'$ defined as in Eq.~\ref{eq:composition}, the deep fusion of the two models is defined as follows: We first extend the notation of $\F$ as follows,
%
 $$   \F(f, f') = \F(g_{r}, g_{r}') \circ \ldots  \circ \F(g_1, g_1')([x,x]);$$
And we denote by 
$
\textsc{Avg}(x,y) = (x+y)/2.
$
The function that averages two vectors of the same dimension, then the deep fusion is defined as,
\begin{equation*}
    DF(f,f') = \textsc{Avg}(\F(f, f')).
\end{equation*}

Intuitively, the deep fused model is maintaining a concatenation of the hidden representations from models $f$ and $f'$ (fusion property) throughout the network, and taking the average of their logits.

This means that after the deep fusion operation, the function calculated by the model is equivalent to the function of average ensemble of the two models. However, if we continue training the fused model, the extra parameters added by the zero blocks in the example can start leveraging the hidden representation from the cross model, and potentially lead to better performance.

Deep fusion allows the models to be distributed across multiple GPUs while still taking advantage of the strengths of both data parallelism and model parallelism.

Self deep fusion of a model $f$ is defined as deep fusing the model with itself (that is, $DF(f, f)$). It can be considered a loss preserving growth operation that does not change the network's predictions to any given input. 

\subsection{Deep Fusing T5 Transformers}
This section describes how to deep fuse two (or more) T5 models \cite{t5paper}, $f$ and $f'$, discussing the particularities of each layer type. Once the fusion is completed, the hidden representation of the newly fused model should be a combination of the two hidden representations from the original models, aligned along the feature dimension axis.

Starting from the bottom layer, the fusion of the embedding layer is achieved by simply concatenating the smaller embeddings together (on the feature dimension axis). Next, for the multi-head attention, if $f$ has $y$ heads, and $f'$ has~$y'$ heads, then, the fused model will have $y~+~y'$ heads, and the additional heads do not interact with the others. 
All projections (query, key, value, attention output) as well as the MLP blocks are converted to diagonal matrices with extra parameters initialized to zero (see eq.~(\ref{eq:diagonal})) to prevent leaking information from the wrong hidden representation at initialization.




Note that skip connections and activations are parameter free and do not need further handling. Similarly, the element-wise scaling operation holds a scaling parameter per element in the hidden representation, and thus is trivial to fuse. 

Lastly, the fusion of the normalization of the hidden representation between attention and MLP layers proves to be unfeasible. This is due to the fact that it's not possible to uphold the fusion rule and the fusion property simultaneously. 
For the normalization layer we either: 1)~Preserve the fusion property but break the fusion rule by normalizing the hidden representations of the sub-models individually and then concatenating them; or 2) keep the fusion rule but violate the fusion property by treating the concatenated hidden representation as a single vector for normalization. It's important to note that the first option requires additional coding beyond parameter initialization, unlike the second option. This dilemma doesn't occur in self deep fusion.




\section{Gradient Flow Bias}\label{theory}
In this section, we analyze the inductive bias introduced by network growth using BEA \cite{hairer2006}. The idea behind BEA is to approximate the discrete updates of the optimizer by a continuous gradient flow (described by an ODE called the {\it{modified equation}}) on a modified loss that comprises the original loss and additional terms modelling the inductive biases. For instance, \cite{igr} showed that a single update of SGD  follows a gradient flow on the modified (batch) loss $\tilde L(\theta) = L(\theta) + \frac h4 \|\nabla_{\theta} L(\theta)\|^2$, whose inductive bias is the flatness regularizer $ \frac h4 \|\nabla_{\theta} L(\theta)\|^2$ depending on the learning rate $h$. More precisely, the modified equation is $\dot \theta(t) = -\nabla \tilde L(\theta(t)$ and its solution $\tilde \theta(h)$ starting at $\theta$ and evaluated at $t=h$ approximates the SGD step $\theta' = \theta - h\nabla L(\theta)$ with an error of $\mathcal O(h^3)$. The case of two consecutive updates on the batch losses $L_1$ and $L_2$ has recently been worked out in \cite{dherin2023implicit} in the context of continual learning with batches of changing data distribution. In this case, the modified equation is $\dot \theta(t) = -\nabla \tilde L(\theta(t)) - \frac h2 [\nabla L_1, \nabla L_2](\theta(t))$ with modified loss $\tilde L(\theta) = L_1(\theta) + L_2(\theta) + \frac h2 \|\nabla L_1(\theta)\|^2 + \frac h2\| \nabla L_2(\theta)\|^2$. In this latter modified equation we see that the Lie bracket $[\nabla L_1, \nabla L_2]$ of the consecutive updates modifies the implicit flatness regularization on both updates. The Lie bracket defined as $[F, G] = (\nabla G)F - (\nabla F) G$ on general vector fields $F,G:\mathbb{R}^d\rightarrow \mathbb{R}^d$ is an important tool in differential geometry; see \cite{lee2012smooth_manifolds} for instance.

We approach the analysis of network growth similarly as a consecutive learning problem. In our new setting, the network before the growth operation is modelled by a small model loss while the model post-growth is modelled by a big model loss. We want to understand the additional inductive bias introduced by the growth step in the optimization. Therefore we are interested in computing the inductive bias generated by the two consecutive optimization steps surrounding the network growth operation.
The training of the small model and the large model follow respectively the gradient fields $F(w)=-\nabla_w L_1(w)$ and $G(w)=-\nabla_w L_2(w)$. Both vector fields are defined on the parameter space of the large model with $w_0=(\theta_1, \ldots, \theta_n, \eta=0)$ and subsequent training steps starting from $w=(\theta_{p_1}, \ldots, \theta_{p_n}, \eta)$. $\eta$ is the set of parameters in the large model that are not set with the small model and they are set to zero right after fusion.


For $n$ models, immediately after fusion we form parameters $w_1$ and its gradient descent update is:
\begin{equation}
    w_1 = w_0 - h\nabla_w L_1(w_0) = w_0 + hF(w_0).
\end{equation}
Subsequent steps with vector field $G$ are defined as:
\begin{equation}
   \label{eq:w_2}
    w_2 = w_1 - h_B\nabla_w L_2(w_1) = w_1 + \alpha hG(w_1),
\end{equation}
where $h_B=\alpha h$ is the learning rate for the big model.

The losses $L_1(w)$ and $L_2(w)$ can be expressed in terms of the small and big models as follows:
\begin{align}
  L_1(w) &= L \left( \frac{1}{n}\sum_{i=1}^{n} f_i(x;\theta_i), y \right), \\
  L_2(w) &= L_B(w) = L_B (\theta_{p_1}, \ldots, \theta_{p_n}, \eta), \label{eq:3}
\end{align}
where $x$ and $y$ are the inputs and reference labels, respectively, and $f_i(x; \theta_i)$ is the output for the $i$-th model with parameters $\theta_i$. Note that $L_1(w)$ is the loss right after fusion, so all small models are independent but the output is the average operator.
Also, for parameters of the form $w_0$ with $\eta=0$, we have $L_1(w_0)=L_2(w_0)$.

We present the true modified loss in Theorem~\ref{th:modified_loss} and the modified equation in Theorem~\ref{th:modified_equation}. These theorems suggest that in these network growth settings, the problem does not follow a gradient flow that is some weighted sum of the various losses (before and after the operation); rather, it is biased with the Lie bracket of the gradients' vector fields. More interestingly, the higher the big network's learning rate, the more prominent this bias is.

The modified loss in Theorem~\ref{th:modified_loss} corresponds with the implicit regularization definition from~\citep{igr}, taking into account the fusion operator.

When the Lie bracket between the two task gradients $[\nabla_w L_1, \nabla_w L_2]$
is not zero, this may actually interfere with this implicit regularization, potentially creating settings where the second update steers the learning trajectory away from the flatter regions of the first task.

\begin{restatable}[Modified loss]{theorem}{marginbound}
\label{th:modified_loss}
The consecutive gradient updates can be bounded by following the gradient flow on this modified loss:
\begin{align}
    \tilde{L}(w)
    & \leq \frac{1}{n}\sum_{i=1}^{n}L_S(\theta_i) + L_B(w) + \frac{h}{4n^2}\sum_{i=1}^{n} \lVert \nabla_{\theta_i} L_S(\theta_i) \rVert^2 \nonumber \\ &+ \frac{h\alpha^2}{4} \lVert \nabla_w L_B(w) \rVert^2.
\end{align}
where $L_S(\theta_i)$ is the loss for the $i$-th small model.

\end{restatable}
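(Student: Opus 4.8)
The idea is to carry out the backward-error-analysis (BEA) computation of \citep{igr, dherin2023implicit} for the two consecutive updates $w_0\mapsto w_1\mapsto w_2$, but with the asymmetric learning rates $h$ on $L_1$ and $h_B=\alpha h$ on $L_2$, and then to re-express the modified loss it produces in terms of the small- and big-model losses using (i) the block-diagonal structure of the fused parameters and (ii) convexity of the outer loss $L$.

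First I would compose the two steps: substituting $w_1=w_0+hF(w_0)$ into $w_2=w_1+\alpha h\,G(w_1)$ and Taylor-expanding $G$ about $w_0$ gives $w_2=w_0+h(F+\alpha G)(w_0)+\alpha h^2(\nabla G\,F)(w_0)+\mathcal O(h^3)$. Seeking a modified equation $\dot w=V(w)+hV_1(w)$ whose time-$h$ flow matches this to order $h^2$, one expands $w_0+hV+h^2V_1+\tfrac12 h^2\nabla V\,V+\mathcal O(h^3)$ and matches powers of $h$: order $h$ forces $V=F+\alpha G=-\nabla(L_1+\alpha L_2)$, and order $h^2$ forces $V_1=\alpha\,\nabla G\,F-\tfrac12\nabla V\,V$. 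Expanding $\nabla V\,V$ and using $\nabla F\,F=\tfrac12\nabla\|\nabla L_1\|^2$, $\nabla G\,G=\tfrac12\nabla\|\nabla L_2\|^2$ and the identity $[F,G]=\nabla G\,F-\nabla F\,G$ collapses $V_1$ to $-\nabla\big(\tfrac14\|\nabla L_1\|^2+\tfrac{\alpha^2}{4}\|\nabla L_2\|^2\big)+\tfrac{\alpha}{2}[F,G]$. Hence the conservative part of the modified dynamics is the gradient flow of $\bar L(w)=L_1(w)+\alpha L_2(w)+\tfrac h4\|\nabla_w L_1(w)\|^2+\tfrac{h\alpha^2}{4}\|\nabla_w L_2(w)\|^2$, the non-conservative remainder $\tfrac{h\alpha}{2}[F,G]$ being exactly what Theorem~\ref{th:modified_equation} records; this mirrors the implicit-regularization decomposition of \citep{igr}, now ``seeing'' the fusion through $L_1$.

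It remains to translate $\bar L$ into the quantities in the statement. Since $L_2=L_B$, the last term is already $\tfrac{h\alpha^2}{4}\|\nabla_w L_B(w)\|^2$. For the leading term, $L_1(w)=L\big(\tfrac1n\sum_{i} f_i(x;\theta_i),y\big)$, so convexity of $L$ in its first argument and Jensen's inequality give $L_1(w)\le\tfrac1n\sum_{i} L\big(f_i(x;\theta_i),y\big)=\tfrac1n\sum_{i} L_S(\theta_i)$; this (together with the harmless replacement of $\alpha L_B$ by $L_B$ when $\alpha\le 1$, using $L_B\ge 0$) is the only source of the inequality. For the flatness term, the fused parameterization makes $L_1$ independent of $\eta$ and dependent on the block $\theta_i$ only through $f_i$, so $\nabla_{\theta_i}L_1(w)=\tfrac1n(\partial_1 L)(\bar f,y)^{\top}\nabla_{\theta_i}f_i$ with $\bar f=\tfrac1n\sum_{j} f_j$; at the fusion geometry — and exactly for self deep fusion, where $\bar f=f_i$ — this equals $\tfrac1n\nabla_{\theta_i}L_S(\theta_i)$, whence $\|\nabla_w L_1(w)\|^2=\tfrac1{n^2}\sum_{i}\|\nabla_{\theta_i}L_S(\theta_i)\|^2$ and $\tfrac h4\|\nabla_w L_1\|^2$ becomes $\tfrac{h}{4n^2}\sum_{i}\|\nabla_{\theta_i}L_S(\theta_i)\|^2$. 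Assembling the four terms gives the claimed bound on $\tilde L(w)$.

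\textbf{Main obstacle.} The mechanical part — composing the steps and matching the $h$-expansions — is routine; the delicate points are (a) the BEA bookkeeping with unequal learning rates, i.e.\ checking that $\alpha$ attaches to $L_2$, $\alpha^2$ to $\|\nabla L_2\|^2$ and $\alpha$ to $[F,G]$, and that the discarded $\mathcal O(h^3)$ terms are uniformly small (a standard Gr\"onwall-type estimate I would only sketch), and (b) the rewrite of $\nabla_w L_1$ through the block-diagonal fusion structure: exact for self deep fusion, but for general deep fusion valid only at (or near) the fusion point $w_0$, so I would either state the result at $w_0$ or adopt ``near the fusion point'' as the standing hypothesis.
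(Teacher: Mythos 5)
Your proposal follows essentially the same route as the paper's proof: compose the two updates with learning rates $h$ and $\alpha h$, Taylor-expand and match orders against a modified-equation ansatz, split the order-$h$ correction into a gradient part plus the Lie bracket $\tfrac{\alpha}{2}[F,G]$, and then pass from $L_1$ to $\tfrac1n\sum_i L_S(\theta_i)$ via Jensen's inequality. If anything, you are more careful than the paper on two points it glosses over — the silent replacement of $\alpha L_B$ by $L_B$ and the rewrite of $\|\nabla_w L_1\|^2$ as $\tfrac{1}{n^2}\sum_i\|\nabla_{\theta_i}L_S(\theta_i)\|^2$, which as you note is exact only at the fusion point or under self-fusion.
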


For the special case of self-fusion (i.e., the same model is fused with itself), the actual modified loss is,
\begin{align*}
    \tilde{L}(w)
    &= L_S (\theta) + L_B(w) + 
     \frac{h}{4} \lVert \nabla_\theta L_S(\theta) \rVert^2 \nonumber \\ &+ \frac{h\alpha^2}{4} \lVert \nabla_w L_B(w) \rVert^2.
\end{align*}

\begin{restatable}[Modified equation]{theorem}{modifiedEquation}
\label{th:modified_equation}
Consider two consecutive training runs like the ones described above, first a small model, then deep fusion and finally a large model training. The modified equation shadowing the update composition is of the form:
\begin{equation}
    \dot{w}(t) = -\nabla_w \Tilde{L}(w_t) + \frac{h\alpha}{2} [\nabla_w L_1,\nabla_w L_2](w_t) + {\cal O}(h^2). \nonumber
\end{equation}
\end{restatable}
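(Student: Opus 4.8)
The plan is to derive the modified equation by the standard Backward Error Analysis bootstrap: write down the exact two-step update map $w_0 \mapsto w_2$ as a Taylor series in the learning rate $h$, then posit an autonomous ODE $\dot w = g_0(w) + h\, g_1(w) + \mathcal O(h^2)$ whose time-$h$ (actually time-$(1+\alpha)h$, after rescaling) flow matches that map to the required order, and solve for $g_0$ and $g_1$ by matching powers of $h$. Concretely, I would first expand $w_1 = w_0 + hF(w_0)$ and then $w_2 = w_1 + \alpha h G(w_1) = w_0 + hF(w_0) + \alpha h G(w_0) + \alpha h^2 (\nabla G)(w_0)F(w_0) + \mathcal O(h^3)$, so the leading combined drift is $F + \alpha G$ and the first $\mathcal O(h^2)$ correction in the map is $\alpha (\nabla G) F$.

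Next I would expand the flow of the candidate ODE to second order. With $\dot w = g_0 + h g_1$, Taylor expansion of the solution gives $w(h) = w_0 + h g_0 + h^2\big(g_1 + \tfrac12 (\nabla g_0) g_0\big) + \mathcal O(h^3)$ — here one must be careful about the total time over which the flow runs, since the two discrete steps use learning rates $h$ and $\alpha h$; the cleanest bookkeeping is to treat the composition as a single near-identity map and ask for the interpolating vector field, which forces $g_0 = F + \alpha G$ (up to the overall time normalization the paper uses). Matching the $h^2$ terms then yields $g_1 = \alpha(\nabla G)F - \tfrac12(\nabla g_0)g_0$. The key algebraic step is to recognize that $(\nabla g_0) g_0 = (\nabla(F+\alpha G))(F+\alpha G)$ expands into $(\nabla F)F + \alpha^2 (\nabla G)G + \alpha\big((\nabla F)G + (\nabla G)F\big)$, and that the cross terms combine with the $\alpha(\nabla G)F$ from the map to produce exactly $\tfrac{\alpha}{2}\big((\nabla G)F - (\nabla F)G\big) = \tfrac{\alpha}{2}[F,G]$, while the diagonal terms $(\nabla F)F$ and $\alpha^2(\nabla G)G$ are pure-gradient and reassemble (using $F = -\nabla L_1$, $G = -\nabla L_2$, so $(\nabla F)F = \tfrac12\nabla\|\nabla L_1\|^2$, etc.) into $-\nabla$ of the flatness terms $\tfrac h4\|\nabla L_1\|^2 + \tfrac{h\alpha^2}{4}\|\nabla L_2\|^2$. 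Substituting $[F,G] = [-\nabla L_1, -\nabla L_2] = [\nabla L_1, \nabla L_2]$ and collecting, the $h^1$ part of the modified vector field is $-\nabla\tilde L(w) + \tfrac{h\alpha}{2}[\nabla_w L_1, \nabla_w L_2](w)$, which is the claimed form; then I would invoke Theorem~\ref{th:modified_loss} to identify $\tilde L$ (modulo the subtlety that there it appears as a bound, owing to the averaging/concatenation structure of $L_1$, whereas the vector-field identity is exact at this order).

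The main obstacle I anticipate is the time-normalization/bookkeeping: because the two steps carry different learning rates ($h$ versus $\alpha h$), the composite map is not the time-$2h$ flow of a single field, and one has to decide what ``the modified equation shadowing the update composition'' means — presumably running the flow for a total time $h$ (or $(1+\alpha)h$) and absorbing constants appropriately. Getting the factor in front of the Lie bracket to come out as exactly $h\alpha/2$ rather than, say, $h\alpha/2$ times some extra constant depends on making that choice consistently with how $\tilde L$ in Theorem~\ref{th:modified_loss} was normalized; I would pin this down first by re-deriving the single-step case (recovering $\dot\theta = -\nabla(L + \tfrac h4\|\nabla L\|^2)$ from $\theta' = \theta - h\nabla L$) with the same conventions, then extend to the asymmetric two-step case. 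A secondary, more mechanical point is handling the block structure: $F$ acts on the full large-model parameter vector $w=(\theta_{p_1},\dots,\theta_{p_n},\eta)$ but $L_1$ only depends on the $\theta$-blocks through the averaged output and is flat in $\eta$ at $\eta=0$, so $\nabla_w L_1(w_0)$ has a block-diagonal gradient and a vanishing $\eta$-component; one must check the Jacobian products $(\nabla G)F$ and $(\nabla F)G$ are evaluated at $w_0$ with $\eta=0$ (where $L_1(w_0)=L_2(w_0)$, as the excerpt notes) so that the Lie bracket is well-defined and the leading-order terms are consistent, but this does not change the final formula — it only justifies that the expansion is being taken around the right point.
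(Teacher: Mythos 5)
Your proposal follows essentially the same route as the paper's proof: expand the composite map $w_2 = w_0 + h(F+\alpha G)(w_0) + \alpha h^2 (\nabla G)F(w_0) + \mathcal O(h^3)$, posit $\dot w = H_0 + hH_1$, match powers of $h$ against the time-$h$ flow expansion to get $H_0 = F+\alpha G$ and $H_1 = \alpha(\nabla G)F - \tfrac12(\nabla H_0)H_0$, and split the result into the antisymmetric Lie-bracket part $\tfrac{\alpha}{2}[F,G]$ plus pure-gradient flatness terms reassembled into $-\nabla\tilde L$. Your added remarks on time normalization and on the fact that $\tilde L$ in Theorem~\ref{th:modified_loss} is only a Jensen upper bound are correct observations, but the core argument is the one the paper gives.
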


%
\begin{corollary}
The larger the learning rate in the big model the more influence of the Lie bracket. 
\end{corollary}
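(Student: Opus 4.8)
The plan is to derive the Corollary directly from Theorem~\ref{th:modified_equation}, since the statement is really a reading of the coefficient structure of the modified equation. First I would recall that the big model's learning rate is $h_B = \alpha h$, so $\alpha = h_B/h$, and therefore the coefficient of the Lie bracket term in the modified equation, namely $\tfrac{h\alpha}{2}$, equals exactly $\tfrac{h_B}{2}$. Hence the interaction term $\tfrac{h_B}{2}\,[\nabla_w L_1,\nabla_w L_2](w_t)$ grows linearly in $h_B$ (with the $\mathcal{O}(h^2)$ remainder controlled by the base rate $h$, which is held fixed), which already gives the qualitative claim.

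To make ``more influence'' quantitative I would compare this contribution against the remainder of the modified vector field. Expanding $-\nabla_w \tilde{L}$ from Theorem~\ref{th:modified_loss}, one sees it splits into an $\alpha$-independent drift — the term $-\tfrac1n\sum_i \nabla_{\theta_i} L_S$, the term $-\nabla_w L_B$, and the gradient of the small-model flatness regularizer — together with an $\alpha^2$-scaled piece $-\tfrac{h\alpha^2}{2}\,\nabla^2_w L_B(w)\,\nabla_w L_B(w)$ coming from the big-model flatness regularizer. The Lie bracket is then the unique $\mathcal{O}(\alpha)$ contribution, and, more importantly, the only term in the whole modified equation that couples the two gradient fields $\nabla_w L_1$ and $\nabla_w L_2$: neither the $\alpha$-independent drift nor the $\alpha^2$ flatness correction can reproduce a cross-model interaction. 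So I would argue that, in the range of learning rates where the BEA expansion is valid (so that the $\mathcal{O}(h^2)$ remainder is negligible), the cross-model interaction enters the dynamics through a term whose weight relative to the $\alpha$-independent drift scales like $\alpha \propto h_B$; differentiating the modified vector field with respect to $\alpha$ isolates $\tfrac{h}{2}[\nabla_w L_1,\nabla_w L_2] + \mathcal{O}(\alpha)$, confirming the monotone dependence.

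The main obstacle is not the algebra, which is immediate, but phrasing the claim so that it is actually true: because $\tilde{L}$ itself carries an $\alpha^2$ term, ``more influence of the Lie bracket'' cannot mean that the bracket dominates the entire flow for arbitrarily large $\alpha$. I would therefore state the Corollary as a statement about the weight attached to the interaction/coupling term in the leading-order modified equation — it is linear in $h_B$ and is the only place the two losses' gradients interact — rather than as an asymptotic statement as $h_B\to\infty$, and I would flag the range of validity of the modified-equation approximation as the relevant caveat.
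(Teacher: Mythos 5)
Your proposal is correct and matches the paper's reasoning: the paper gives no separate proof of this corollary, treating it as an immediate reading of the coefficient $\tfrac{h\alpha}{2}=\tfrac{h_B}{2}$ on the Lie bracket term in Theorem~\ref{th:modified_equation}, which is exactly your first step. Your additional observations (the bracket being the only cross-coupling of $\nabla_w L_1$ and $\nabla_w L_2$, and the caveat about the validity range of the BEA expansion) go beyond what the paper states but do not change the argument.
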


In the coming section, we analyze this bias and understand its structure. Later in the experiment section, we empirically measure its importance.


\subsection{Lie Bracket Structure vs the Gradient Loss}
In the preceding section, we observed that the Lie bracket serves as a crucial adjustment factor to the gradient difference between the loss of the small model and the fused model. This section examines the Lie bracket from two perspectives. Initially, Lemma~\ref{lemma:same_gradient} confirms that the updated model parameters $\eta$ follow the same gradient updates as the original small model parameters. The Lie bracket's essential role in fused models becomes evident here; without it, the model updates would be uniform, lacking any additional effects. Subsequently, Lemma~\ref{lemma:non_zero} illustrates that the Lie bracket ensures the updated parameter directions are the only non-zero components, highlighting the Lie bracket's significance in refining our algorithm.

For the sake of the analysis, the following discussion assumes a self fusion environment of $n$ models. In the self fusion case, the output of the big model right after fusion is $\hat{y} = \frac{1}{n} \sum_{i=0}^n f_i(x, \theta_i) = f(x, \theta)$ given that all models are identical.

\begin{restatable}[Scaled gradient]{lemma}{scaledGradient}
\label{lemma:scaled_gradient}
Let $f$ be a small model with parameters $\theta$. Suppose its loss $L_S$ is continuous and that $L_S$'s partial derivatives are Lipschitz continuous. When performing self deep fusion $n$ times of $f$, for the first gradient step after fusion, $\nabla_{\theta} L_B(\theta, \ldots, \theta, \eta=0)$, we have 
$$
\nabla_{\theta} L_B(\theta, \ldots, \theta, \eta=0) = \frac 1n \nabla_{\theta} L_S(\theta).
$$
\end{restatable}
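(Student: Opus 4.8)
\textbf{Proof plan for Lemma~\ref{lemma:scaled_gradient}.}

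The plan is to exploit the fusion property together with the chain rule, treating the big model's output right after self deep fusion as an explicit function of the shared small-model parameters. First I would write down the big model's output at the fusion point $w_0 = (\theta, \ldots, \theta, \eta = 0)$. By the fusion property established in Section~\ref{fusion}, the zero off-diagonal blocks guarantee that at initialization the hidden representations of the $n$ copies do not interact, so the big model computes the concatenation of the $n$ identical sub-network outputs and then averages the logits: $f_B(x; w_0) = \frac1n \sum_{i=1}^n f(x; \theta) = f(x;\theta)$. Consequently $L_B(w_0) = L(f_B(x;w_0), y) = L(f(x;\theta), y) = L_S(\theta)$, which re-derives the identity $L_1(w_0) = L_2(w_0)$ already noted in the text, but more importantly exhibits $L_B$ near $w_0$ as a composition whose inner map is the averaging of $n$ sub-models.

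The key step is the differentiation with respect to one shared block $\theta$. I would introduce independent copies $\theta^{(1)}, \ldots, \theta^{(n)}$ of the parameters feeding the $n$ diagonal blocks, so that $L_B$ as a function on the big parameter space is $L_B\big((\theta^{(i)})_i, \eta\big)$, and evaluate its gradient at the diagonal point $\theta^{(1)} = \cdots = \theta^{(n)} = \theta$, $\eta = 0$. Because the off-diagonal blocks are zero at this point, perturbing $\theta^{(i)}$ affects only the $i$-th sub-network's contribution to the averaged logit, so $\nabla_{\theta^{(i)}} L_B = \frac1n \nabla_\theta L_S(\theta)$ for each $i$ (the factor $\frac1n$ coming from the $\textsc{Avg}$ in the deep fusion definition, via the chain rule on the outer loss $L$). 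The gradient ``$\nabla_\theta L_B(\theta,\ldots,\theta,\eta=0)$'' appearing in the statement is, by the notation of Section~\ref{theory} where all diagonal blocks carry the same symbol $\theta$, the restriction of $L_B$ to the diagonal subspace; by the total-derivative/chain rule this restricted gradient is $\sum_{i=1}^n \nabla_{\theta^{(i)}} L_B = n \cdot \frac1n \nabla_\theta L_S(\theta) = \nabla_\theta L_S(\theta)$ — so one must be careful about which of the two readings is intended. Matching the claimed conclusion $\frac1n \nabla_\theta L_S(\theta)$, the intended object is the per-block gradient $\nabla_{\theta^{(i)}} L_B$ (equivalently $\nabla_{\theta_i} L_B$ in the notation $L_B(\theta_{p_1},\ldots,\theta_{p_n},\eta)$ of Eq.~\eqref{eq:3}), and the proof is exactly the one-line chain-rule computation above.

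The role of the regularity hypotheses (continuity of $L_S$, Lipschitz partial derivatives) is minor: they are only needed to guarantee that $L_S$ is differentiable so that $\nabla_\theta L_S(\theta)$ exists and that the chain rule applies cleanly; I would invoke them once at the start and not belabor them. The main obstacle, as flagged above, is purely notational — pinning down whether $\nabla_\theta L_B(\theta,\ldots,\theta,\eta=0)$ denotes the gradient with respect to a single diagonal block or the gradient along the full diagonal embedding — since the factor of $n$ hinges entirely on that choice; once the per-block interpretation is fixed, the argument is an immediate consequence of the fusion property (no cross-block interaction at the zero-initialized off-diagonal) and the $\frac1n$ averaging weight. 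I would therefore open the proof by making this interpretation explicit, then give the two-line chain-rule derivation, and close by remarking that this is precisely why, as the surrounding discussion says, all the new parameters $\eta$ would otherwise receive identical updates — it is the Lie bracket term in Theorem~\ref{th:modified_equation} that breaks this uniformity.
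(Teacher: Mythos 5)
Your proposal is correct, but it reaches the $\tfrac1n$ factor by a different route than the paper. You differentiate $L_B$ directly per block: on the $\eta=0$ slice the fused network is the average $\tfrac1n\sum_i f(x;\theta^{(i)})$ of non-interacting copies, so the chain rule through the $\textsc{Avg}$ operator immediately gives $\nabla_{\theta^{(i)}}L_B = \tfrac1n\nabla_\theta L_S(\theta)$. The paper instead works with the diagonal restriction: it establishes $\nabla_{\theta_1}L_B=\nabla_{\theta_2}L_B$ by the permutation symmetry of the self-fused architecture, and then shows via an explicit $\epsilon$-limit and Taylor expansion that $\nabla_\theta L_S(\theta)=\sum_i \nabla_{\theta_i}L_B(\theta,\ldots,\theta,0)$ (perturbing the small model's parameter corresponds to perturbing \emph{all} diagonal blocks simultaneously), after which equal splitting yields $\tfrac1n$ per block. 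The two arguments differ in what they assume: yours requires that the blocks remain non-interacting in a neighborhood of the diagonal (true for the idealized loss $L_1$ in Eq.~(8), but not literally true for, e.g., the fusion-rule variant where a joint normalization couples the halves once they differ), whereas the paper's symmetry-plus-diagonal-identity argument only needs the fused model to agree with the small model \emph{on} the diagonal, and so is slightly more robust. In exchange, your derivation is shorter and makes the origin of the $\tfrac1n$ (the averaging of logits) transparent. Your explicit disambiguation of whether $\nabla_\theta L_B(\theta,\ldots,\theta,\eta=0)$ denotes the per-block gradient (giving $\tfrac1n\nabla_\theta L_S$) or the gradient along the diagonal embedding (giving $\nabla_\theta L_S$) is a genuinely useful clarification that the paper leaves implicit.
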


\begin{restatable}[Same gradient]{lemma}{sameGradient}
\label{lemma:same_gradient}
Let $f$ be a small model with parameters $\theta$. When performing self deep fusion $n$ times of $f$, for the first gradient step after fusion for every $\eta_i$, we have that 
$$
\nabla_{\eta_i} L_B(\theta, \ldots, \theta, \eta=0) = \frac 1n \nabla_{\theta_j} L_S(\theta),
$$
for some $\theta_j$ that is a parameter in the same layer.
\end{restatable}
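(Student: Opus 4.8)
The plan is to reduce the statement to Lemma~\ref{lemma:scaled_gradient} via a single structural observation about the block structure created by fusion. Fix the layer $k$ in which the coordinate $\eta_i$ lives. By Eq.~(\ref{eq:diagonal}), the fused weight of layer $k$ is block-diagonal with $n$ copies of the small-model weight $w_k$ on the diagonal, and $\eta_i$ is one entry, say at position $(a,b)$, of an off-diagonal block connecting the incoming activations of sub-model $p$ to the pre-activations of sub-model $q$, with $p\neq q$; at the fusion point $w_0=(\theta,\dots,\theta,\eta=0)$ this block is zero. Since $n$-fold self deep fusion starts from $n$ identical copies, at $w_0$ every sub-model feeds forward identically, so sub-model $p$'s incoming activation at layer $k$ equals the small model's activation $a_{k-1}$; and because every off-diagonal block of layer $k$ is zero, the pre-activation of sub-model $q$ at output coordinate $b$ is exactly the small model's pre-activation $[a_{k-1}w_k+b_k]_b$.

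First I would apply the chain rule. The coordinate $\eta_i$ enters the forward pass only through sub-model $q$'s pre-activation at output $b$, contributing the monomial $[a_{k-1}]_a\,\eta_i$; hence at $w_0$ one gets $\partial (\text{pre-activation})_{q,b}/\partial\eta_i=[a_{k-1}]_a$, and therefore
$$\nabla_{\eta_i} L_B(w_0)=\frac{\partial L_B}{\partial (\text{pre-activation})_{q,b}}(w_0)\cdot[a_{k-1}]_a .$$
Next I would run the identical computation for the diagonal entry at position $(a,b)$ of the $q$-th block — which is literally a copy of the small-model weight $(w_k)_{ab}$ — and observe that it affects only the same pre-activation coordinate $(q,b)$ and through the same factor $[a_{k-1}]_a$, so it yields the same expression. (Here one must check that a first-order perturbation of $\eta_i$, which perturbs sub-model $q$'s activations, cannot leak into the other sub-models downstream; this holds because all off-diagonal blocks of every layer above $k$ are also zero at $w_0$, so to first order the perturbation stays inside sub-model $q$, exactly as a perturbation of the matching diagonal entry would.) Consequently $\nabla_{\eta_i} L_B(w_0)$ equals the $L_B$-gradient with respect to the $(a,b)$ weight of sub-model $q$ in layer $k$.

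Finally I would invoke Lemma~\ref{lemma:scaled_gradient}, which says that the $L_B$-gradient with respect to any coordinate of a single sub-model, evaluated at the fusion point, is $\tfrac1n$ times the corresponding small-model gradient. Applying it to the $(a,b)$ weight of layer $k$ gives $\nabla_{\eta_i} L_B(w_0)=\tfrac1n\,\nabla_{\theta_j} L_S(\theta)$ with $\theta_j=(w_k)_{ab}$, a parameter in layer $k$ — the same layer as $\eta_i$ — which is the claim. The same argument covers bias- and scaling-type coordinates verbatim, and the normalization-layer ambiguity does not arise in self deep fusion, so the statement holds for the T5 components as well.

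I expect the only real obstacle to be the bookkeeping in the middle step: making rigorous the claim that, \emph{to first order at $w_0$}, perturbing an off-diagonal entry is indistinguishable from perturbing the matching diagonal entry of sub-model $q$. Concretely this requires formalizing that a first-order change in sub-model $q$'s activations cannot reach the other sub-models through the zeroed off-diagonal blocks of the higher layers (a clean induction on layers above $k$, using that those blocks vanish at $w_0$). Everything else is the chain rule together with Lemma~\ref{lemma:scaled_gradient}.
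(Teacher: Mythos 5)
Your proof is correct and follows essentially the same route as the paper's: both arguments show that, because the input to the fused layer at the fusion point is a duplicated vector $(y\,\|\,y)$, perturbing the off-diagonal entry $\eta_i$ produces exactly the same layer output (hence the same loss) as perturbing the matching diagonal entry of one sub-model, and both then invoke Lemma~\ref{lemma:scaled_gradient} to supply the factor $1/n$. The downstream-leakage bookkeeping you flag as the main obstacle is actually moot: the two perturbed weight matrices yield identical layer outputs for every $\epsilon$ (not merely to first order), so everything downstream coincides exactly and no induction over the higher layers is required.
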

 
Lemma~\ref{lemma:scaled_gradient} and Lemma~\ref{lemma:same_gradient} suggests that the new parameters~$\eta$ are updated using a gradient analogous to the combined gradients from the small models. This means that the $\eta$ parameters are updated by concatenating the $\theta_{p_{i}}$ parameters from the small model. Without an implicit Lie bracket term in subsequent gradient steps, the integration of the $\eta$ parameters would be unfeasible.

\begin{restatable}[Non-zero Lie bracket]{lemma}{nonZeroLieBracket}
\label{lemma:non_zero}
In self deep fusion, the $[\nabla_w L_1,\nabla_w L_2]$ has non-zero values only for the parameter dimension $\eta$ that was initialized as zero in the big model.
\begin{equation*}
[F, G] =
\begin{pmatrix}
0 \\
\vdots \\
0 \\
 (\sum_{i=1}^n \nabla_{\theta_i}\nabla_\eta L_B(w))\nabla L_S(\theta)
\end{pmatrix}
\end{equation*}


\end{restatable}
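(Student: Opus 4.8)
The plan is to compute the Lie bracket $[F,G]$ coordinate-by-coordinate, exploiting the block structure of the parameter space $w = (\theta_{p_1},\dots,\theta_{p_n},\eta)$ together with the special properties of the self-fusion point $w_0 = (\theta,\dots,\theta,\eta=0)$. Recall that $F = -\nabla_w L_1$ and $G = -\nabla_w L_2 = -\nabla_w L_B$, and that $[F,G] = (\nabla G)F - (\nabla F)G$, where $\nabla F,\nabla G$ are the Jacobians (Hessians of $-L_1,-L_2$). So the whole statement reduces to understanding the two Hessian-times-gradient products at $w_0$.

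First I would handle the $\theta_{p_i}$-block of each entry. The key structural fact, which I would isolate as a sub-claim, is that at points of the form $w_0$ the loss $L_1$ (the post-fusion average-ensemble loss) and $L_B$ agree to first order along the $\theta$-directions up to the $1/n$ scaling established in Lemma~\ref{lemma:scaled_gradient}, and moreover that $L_1$ has \emph{no} mixed $\theta_i$--$\eta$ or $\theta_i$--$\theta_j$ ($i\neq j$) second derivatives there, because in $L_1$ the $n$ small models are independent (each $f_i$ depends only on $\theta_i$, and the $\eta$ parameters simply do not appear in $L_1$ at all). Hence $\nabla F = -\nabla^2 L_1$ is block-diagonal in the $\theta$-blocks with a zero $\eta$-row and $\eta$-column at $w_0$. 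Combining this with $\nabla_\eta L_1(w_0)=0$ and $\nabla_\eta L_2(w_0)=0$ (the $\eta$ block is at a critical point right after fusion, by Lemma~\ref{lemma:same_gradient}'s setup and the symmetry of self-fusion), I would show that the $\theta_{p_i}$-components of both $(\nabla G)F$ and $(\nabla F)G$ coincide, so they cancel and every $\theta$-entry of $[F,G]$ is zero. This gives the zeros in the first $n$ blocks of the displayed vector.

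Then I would compute the $\eta$-block. Here $\big((\nabla F)G\big)_\eta$ vanishes because the $\eta$-row of $\nabla F = -\nabla^2 L_1$ is identically zero ($\eta$ does not enter $L_1$). What survives is $\big((\nabla G)F\big)_\eta = -\sum_i \nabla_{\theta_i}\nabla_\eta L_B(w_0)\cdot F_{\theta_i}(w_0)$, and since by Lemma~\ref{lemma:scaled_gradient} each $F_{\theta_i}(w_0) = -\tfrac1n\nabla_\theta L_S(\theta)$ is the same vector, this collapses to $\big(\sum_i \nabla_{\theta_i}\nabla_\eta L_B(w)\big)\nabla L_S(\theta)$ up to the overall sign/scaling bookkeeping, which matches the claimed expression. (I would double-check the sign and the $1/n$ against the theorem's own normalization, since the paper's $[F,G]$ entry as written absorbs these constants.)

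The main obstacle I anticipate is the bookkeeping around the $\eta$-block, not the $\theta$-blocks: one must carefully argue that $\nabla_\eta L_1 \equiv 0$ as a function (so that not just the value but the full $\eta$-gradient vanishes, killing the whole $\eta$-row of the Hessian of $L_1$), and separately that $\nabla_\eta L_2(w_0) = 0$ at the fusion point (a pointwise statement, following from the self-fusion symmetry and the zero-initialization of $\eta$). These are different kinds of statements and conflating them would break the cancellation argument. A secondary subtlety is justifying that the relevant second partials exist and the Lie bracket is well-defined, which is exactly what the Lipschitz-of-partials hypothesis inherited from Lemma~\ref{lemma:scaled_gradient} is there to provide; I would state that the same regularity assumption is in force here.
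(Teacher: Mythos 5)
Your route is in substance the paper's: both arguments rest on the same two structural facts, namely (i) on the fusion submanifold $N$ the $\theta$-components of $F$ and $G$ coincide (via Lemmas~\ref{lemma:scaled_gradient} and~\ref{lemma:same_gradient}), and (ii) $L_1$ does not depend on $\eta$, so the $\eta$-row and $\eta$-column of $\nabla^2 L_1$ vanish. The paper packages these as the decomposition $G = F + g_\eta$ on $N$ and kills the $\theta$-part in one stroke via $[F,G]=[F,F]+[F,g_\eta]=[F,g_\eta]$; you instead expand $(\nabla G)F-(\nabla F)G$ blockwise and verify the $\theta$-block cancellation by hand. The blockwise version is more laborious but makes explicit which Hessian entries matter, and your $\eta$-component agrees with the paper's final expression (both modulo the $1/n$ factor that the paper itself silently drops between its last two displays; you are right to flag that normalization).

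One concrete error to fix: you assert $\nabla_\eta L_2(w_0)=0$ at the fusion point, attributing it to the self-fusion symmetry and Lemma~\ref{lemma:same_gradient}, and you list establishing this as one of the two key steps of the proof. The statement is false, and Lemma~\ref{lemma:same_gradient} says the opposite: $\nabla_{\eta_i}L_B(\theta,\dots,\theta,0)=\tfrac1n\nabla_{\theta_j}L_S(\theta)$, which is nonzero whenever the small model is not at a critical point --- indeed the whole point of the surrounding discussion is that the $\eta$ parameters \emph{do} receive gradient immediately after fusion. Fortunately the claim is not load-bearing: the only place $G_\eta$ enters your $\theta$-block computation is multiplied by $\nabla_\eta\nabla_{\theta_i}L_1$, which is identically zero because $L_1$ is independent of $\eta$, so the cancellation goes through without it. Drop that claim and the argument is correct. (A shared soft spot with the paper: the identity $G_{\theta_i}=F_{\theta_i}$ holds only on $N$, so differentiating it inside $(\nabla G)F$ is legitimate only because $F$ is tangent to $N$; neither you nor the paper states this explicitly.)
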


In Lemma~\ref{lemma:non_zero}, the analysis requires viewing the problem through the lens of manifolds. Consider $M$ as the manifold of the big model, defined as $M=\{ (\theta, \ldots, \theta, \eta): \theta \in \mathbb{R}^{d_s}, \eta \in \mathbb{R}^{(d_b - d_s)} \}$. The fused model (right after fusion) resides within a smaller submanifold $N \subset M$, where $N = \{ (\theta, \ldots, \theta, 0): \theta \in \mathbb{R}^{d_s} \}$. This lemma demonstrates that the Lie bracket acts as a mathematical tool for understanding the transfer of information between the submanifold $N$ and the ambient manifold $M$.

The Lie bracket $[ F, G ]$ serves as a correction mechanism that integrates the influence of the submanifold $N$ into the overall system dynamics of $G$.
Essentially, the Lie bracket facilitates the exchange of directions between $N$ and $M$.

\begin{corollary}\label{corollary:small_grad_mag}
If for the small model $\nabla L_S(\theta)$ is close to zero, then the Lie bracket on $N$ is close to zero.
\end{corollary}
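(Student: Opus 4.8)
\textbf{Proof proposal for Corollary~\ref{corollary:small_grad_mag}.}
The plan is to read the claim directly off the explicit description of the Lie bracket on $N$ furnished by Lemma~\ref{lemma:non_zero}, and then to control the single non-trivial block by a matrix--vector norm bound. Recall from that lemma that for $w=(\theta,\ldots,\theta,0)\in N$ the only non-zero component of $[F,G](w)$ lives in the $\eta$-coordinates and equals $\bigl(\sum_{i=1}^n \nabla_{\theta_i}\nabla_\eta L_B(w)\bigr)\nabla L_S(\theta)$. Hence
\[
\lVert [F,G](w) \rVert
= \Bigl\lVert \Bigl(\textstyle\sum_{i=1}^n \nabla_{\theta_i}\nabla_\eta L_B(w)\Bigr)\nabla L_S(\theta) \Bigr\rVert
\le \Bigl\lVert \textstyle\sum_{i=1}^n \nabla_{\theta_i}\nabla_\eta L_B(w) \Bigr\rVert_{\mathrm{op}} \,\lVert \nabla L_S(\theta) \rVert ,
\]
so it suffices to show that the operator norm of the cross-Hessian block is bounded by a constant independent of $\theta$ (at least on the region of parameter space of interest).

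For the second step I would invoke the regularity hypothesis already used in Lemma~\ref{lemma:scaled_gradient}: the partial derivatives of the loss are Lipschitz continuous, which is exactly a global bound on the second-order partial derivatives of $L_B$, say by some $C_0$. Summing over the $n$ cross-Hessian blocks and passing from an entrywise bound to an operator-norm bound (Cauchy--Schwarz, or simply $\lVert A\rVert_{\mathrm{op}}\le\lVert A\rVert_F$) yields $\lVert \sum_{i=1}^n \nabla_{\theta_i}\nabla_\eta L_B(w)\rVert_{\mathrm{op}}\le C$ for a constant $C$ depending only on $n$, the dimensions, and the Lipschitz constants. Combining with the display above gives $\lVert [F,G](w)\rVert \le C\,\lVert \nabla L_S(\theta)\rVert$ for every $w\in N$, i.e. the quantitative form of the statement: if $\lVert\nabla L_S(\theta)\rVert\le\delta$ then $\lVert[F,G](w)\rVert\le C\delta$, so $[F,G]\to 0$ on $N$ as $\nabla L_S(\theta)\to 0$.

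The only real subtlety — and where I would be most careful — is justifying the uniform bound on the cross-Hessian factor. Under the stated assumption (Lipschitz first derivatives of the loss) the bound $C$ is genuinely global, since Lipschitzness of the gradient is precisely boundedness of the Hessian; if instead one only has local $C^2$ regularity, the conclusion should be phrased on a neighborhood of the point of interest, or on a compact region of parameter space visited during training. In either case no further computation is needed: the corollary is an immediate consequence of the explicit formula in Lemma~\ref{lemma:non_zero} together with submultiplicativity of the operator norm.
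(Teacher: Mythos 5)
Your proof is correct and follows essentially the same route as the paper, which treats the corollary as an immediate consequence of the explicit block formula for $[F,G]$ on $N$ established in Lemma~\ref{lemma:non_zero}. You are in fact slightly more careful than the paper in spelling out that the cross-Hessian factor $\sum_{i=1}^n \nabla_{\theta_i}\nabla_\eta L_B(w)$ must be bounded (which the Lipschitz-gradient hypothesis of Lemma~\ref{lemma:scaled_gradient} supplies); this is a worthwhile clarification but not a different argument.
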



\section{Experiments}\label{experiments}
We begin by training T5 language models on the C4 dataset. The term `deep' will be dropped when context allows.

\subsection{Language Models}
The aim of this experiment is to establish a better initial checkpoint for a large T5 \cite{t5paper} transformer network, referred to as \textsc{T5-Medium}, by using two smaller T5 models, denoted as \textsc{T5-Small}. We present two types of results: fusing two unique small models and fusing one model with itself (self fusion).
We trained the following 4 experiments (see dimensionalities in Table \ref{tab:models_dim1} in Appendix~\ref{apndx:first_experiment}):
\begin{enumerate}
    \item \texttt{baseline}: \textsc{T5-Medium} from random initialization.
    \item \texttt{fusion-rule}: \textsc{T5-Medium} trained from fusing the two \textsc{T5-Small} models each trained for 1M steps, while maintaining the fusion rule.
    \item \texttt{fusion-prop}: \textsc{T5-Medium} trained from fusing the two \textsc{T5-Small} models each trained for 1M steps, while maintaining the fusion property.
    \item \texttt{self-fusion}: \textsc{T5-Medium} trained from self fusing a \textsc{T5-Small} model trained for 1M steps.
\end{enumerate}

Zero matrices in Eq.~\ref{eq:diagonal} were substituted with blocks initialized randomly with a low variance. Table~\ref{performance1} presents performance comparison between the various fusion algorithms and their cost.

\begin{table}[ht]
\centering
\begin{scriptsize}
\begin{tabular}{cccc}
    \footnotesize{\textbf{Model}} & \footnotesize{\textbf{Loss}} & \footnotesize{\textbf{Accuracy}} & \footnotesize{\textbf{Cost}} \\
    \toprule
    \texttt{fusion-rule} & 4.61e+4 & 66.88 & 2 $\cdot$ 16h + 37.4h = 53.4h \\
    \texttt{fusion-prop} & \textbf{4.53e+4} & \textbf{67.25} $\pm$ 0.03 & 2 $\cdot$ 16h + 41h = 73h \\
    \texttt{self-fusion} & 4.55e+4 & 67.20 $\pm$ 0.05 & \textbf{16h + 42.4h = 58.4h}  \\\bottomrule
\end{tabular}
\end{scriptsize}
\caption{Performance of different T5-Medium fusion methods at 1 million steps, replicated three times for standard deviation. Cost is in TPU V3 4x4 hours.}
\label{performance1}
\end{table}

\begin{figure}[ht]
\centering
\includegraphics[scale=0.18]{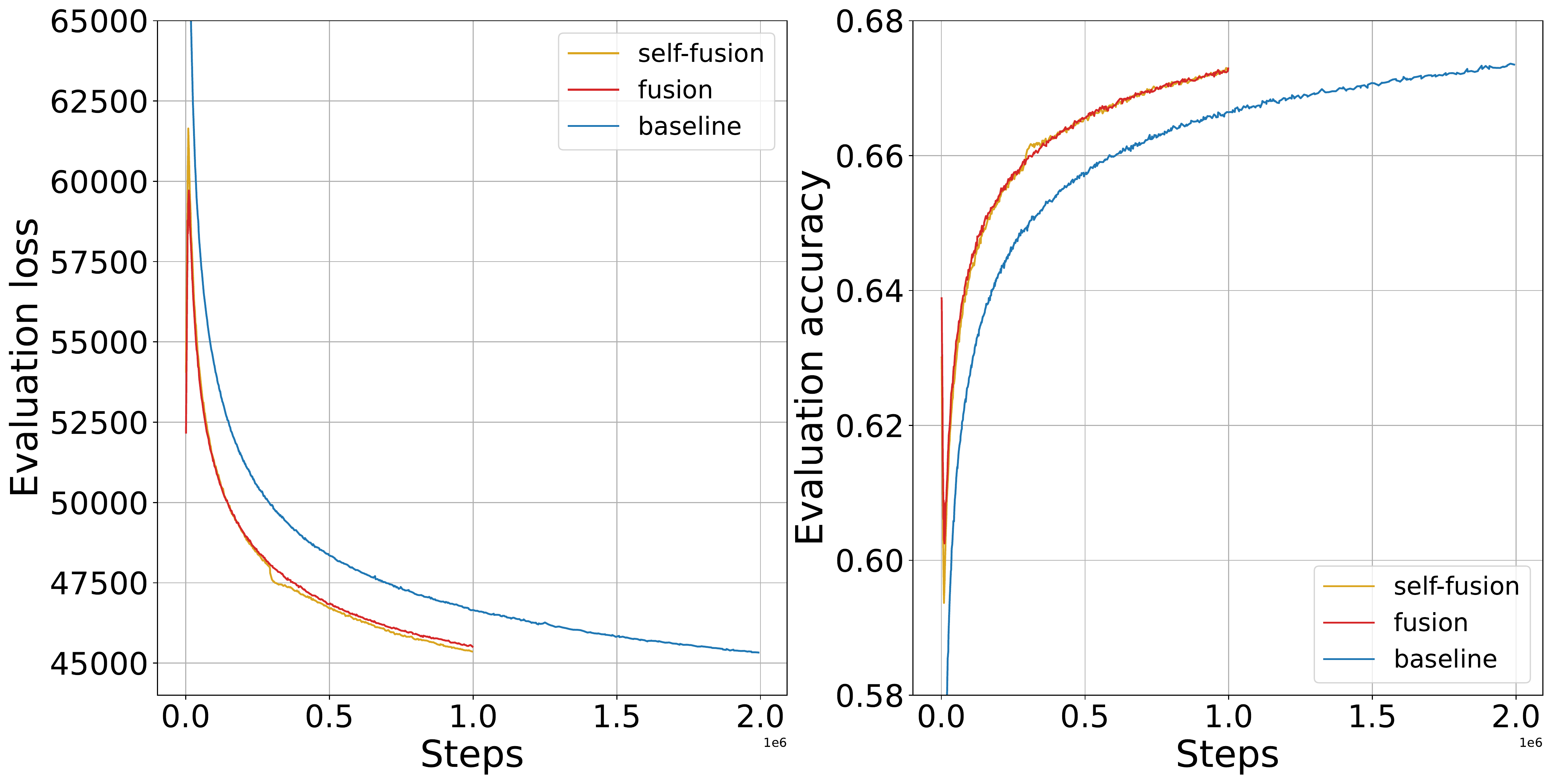}
\caption{Accuracy and loss on validation data. The x-axis of the graph is scaled in millions of steps.} 
\label{fig:accuracy_and_loss}
\end{figure}

The outcomes of our experiments indicate that while it requires extra code changes to the T5 transformer, upholding the fusion property results in superior performance compared to adhering to the fusion rule. Furthermore, we discovered that self fusion yields comparable performance to standard fusion. 

As for comparing against the baseline, Figure~\ref{fig:accuracy_and_loss} shows the learning curves of the top two out of three fusion algorithm compared to baseline. We discovered that the baseline needed
an additional 860K steps to achieve the performance level of self fusion. When employing self fusion, training a \textsc{T5-Medium} {\it resulted in an 18\% reduction in computation time} compared to the \texttt{baseline}. \footnote{\textsc{T5-Small} model training time included.} Details are in Table~\ref{hour_performance_for_5_1}.

\begin{table}[h]
\centering
\begin{footnotesize}
\begin{tabular}{ccccc}
\textbf{Model / time} & \textbf{Fusion} & \textbf{Post fusion} & \textbf{Cost} & \textbf{Acc.} \\
\toprule
    \texttt{baseline} & 0 steps & 1.86M steps &71.2h & 67.2\\
     & 0h & 71.2h & & \\ \midrule
    \texttt{self-fusion} & 1M steps & 1M steps & \textbf{58.4h} &  67.2\\
     & 16h & 41h &  & \\\bottomrule
\end{tabular}
\end{footnotesize}
\caption{Cost is in TPU hours (TPU V3 4x4 topology).}
\label{hour_performance_for_5_1}
\end{table}

\subsection{Fusion in Stages}
We explored staged fusion using T5-S, T5-M, and T5-L architectures (Table \ref{models_dim2}, Appendix~\ref{apndx:second_experiment}) and tested various fusion settings depicted in Figure~\ref{topology}.

\begin{figure}[ht]
\centering
\begin{adjustbox}{max width=\linewidth}
\input{images/settings}
\end{adjustbox}
\caption{Settings for final T5-L fusion: yellow signifies fused models, white indicates regular training, and links represent fusion (double link signifies self fusion). Every node in the graph is trained 1M steps (as an example - algorithm 4 is trained a total of~7M steps).}
\label{topology}
\end{figure}

Every model (T5-S, T5-M, T5-L) is trained 1M steps.  Table~\ref{topology_table} below present the performance of the various models, temporarily disregarding the cost.

\begin{table}[h]
\centering
\begin{footnotesize}
\begin{tabular}{ccc}
\textbf{Model} & \textbf{Loss} & \textbf{Accuracy} \\
\toprule
(1) & 4.04e+4 & 69.89 \\
(2) & 3.93e+4 & 70.45 \\
(3) & 3.9e+4 & 70.56 \\
(4) & 3.87e+4 & 70.74 \\
(5) & 3.91e+4 & 70.57 \\
(6) & 3.91e+4 & 70.47 \\\bottomrule
\end{tabular}
\end{footnotesize}
\caption{Performance of the various ways of fusing T5-L.}
\label{topology_table}
\end{table}

The results show similar performance between fusion and self fusion (settings (3) and (5)) as we seen in the previous experiment. However, repeated self fusion reduces performance, while multiple regular fusions enhance T5-L performance (settings (6) vs (4)). While repeated regular fusion enhance performance, it is very costly, and thus the best performance if we take cost into consideration is self fusion again. Training a model using a single application of self fusion, setting (5), results in a {\it 20\% reduction in computation time on T5-L} compared to the standard setting~(1). Details in Table~\ref{hour_performance_for_5_2} below.

\begin{table}[h]
\centering
\begin{footnotesize}
\begin{tabular}{ccccc}
\textbf{Model / time} & \textbf{Fusion} & \textbf{Post fusion} & \textbf{Cost} & \textbf{Acc.} \\
\toprule
    (1) & 0 steps & 1.7M steps & 246.7h & 70.57\\
     & 0h & 246.7h & & \\ \midrule
    (5) & 1M steps & 1M steps & \textbf{197.9h} &  70.57 \\
     & 47.7h & 150.2h &  & \\\bottomrule
\end{tabular}
\end{footnotesize}
\caption{Cost is in TPU hours (TPU V3 4x4 topology). Baseline (setting (1)) needed 700K extra steps to reach performance of self fusion (setting (5)).}
\label{hour_performance_for_5_2}
\end{table}


\subsection{Fine Tuning for Down Stream Tasks} \label{downstream}

We fine-tuned high performing settings from the first experiment together with a baseline on NLP tasks using the GLUE benchmark.
We trained two \textsc{T5-Small} models for 500K steps before self fusing them to create a \textsc{T5-Medium}. We also trained a standalone \textsc{T5-Medium} to show the difference with a randomly initialized model of the same size. 
These models were pretrained for ~0 (corresponding to \texttt{baseline} without pretraining vs fusion without extra pretraining), 250K, 500K, and 1M steps (\texttt{baseline} only), and then fine-tuned. The GLUE average results are shown in Table \ref{finetune_table} and Figure \ref{glue_fig}. The complete results for each task is presented Table~\ref{tbl:full_glue} in the Appendix.

\setlength{\tabcolsep}{5pt}
\begin{table}[h]
\centering
\begin{footnotesize}
\begin{tabular}{ccccc}
    \textbf{\tiny{Model / Pretrain steps}} & 0 & 250K & 500K & 1M\\
    \toprule
    \texttt{baseline} & 64.07 & 83.33 & 84.35 & 84.74$\pm$0.13 \\
    \texttt{fusion-prop} & \textbf{81.40} & \textbf{84.10} & 84.86$\pm$0.13 & - \\
    \texttt{self-fusion} & 81.01 & 83.71 & \textbf{84.94}$\pm$0.2 & - \\\midrule
    \textsc{T5-Small} & - & - & - & 80.28 \\\bottomrule
\end{tabular}
\end{footnotesize}
\caption{Performance (GLUE average) of the various models on downstream tasks, replicated three times for standard deviation.}
\label{finetune_table}
\end{table}

\begin{figure}[t]
\centering
\includegraphics[scale=0.18]{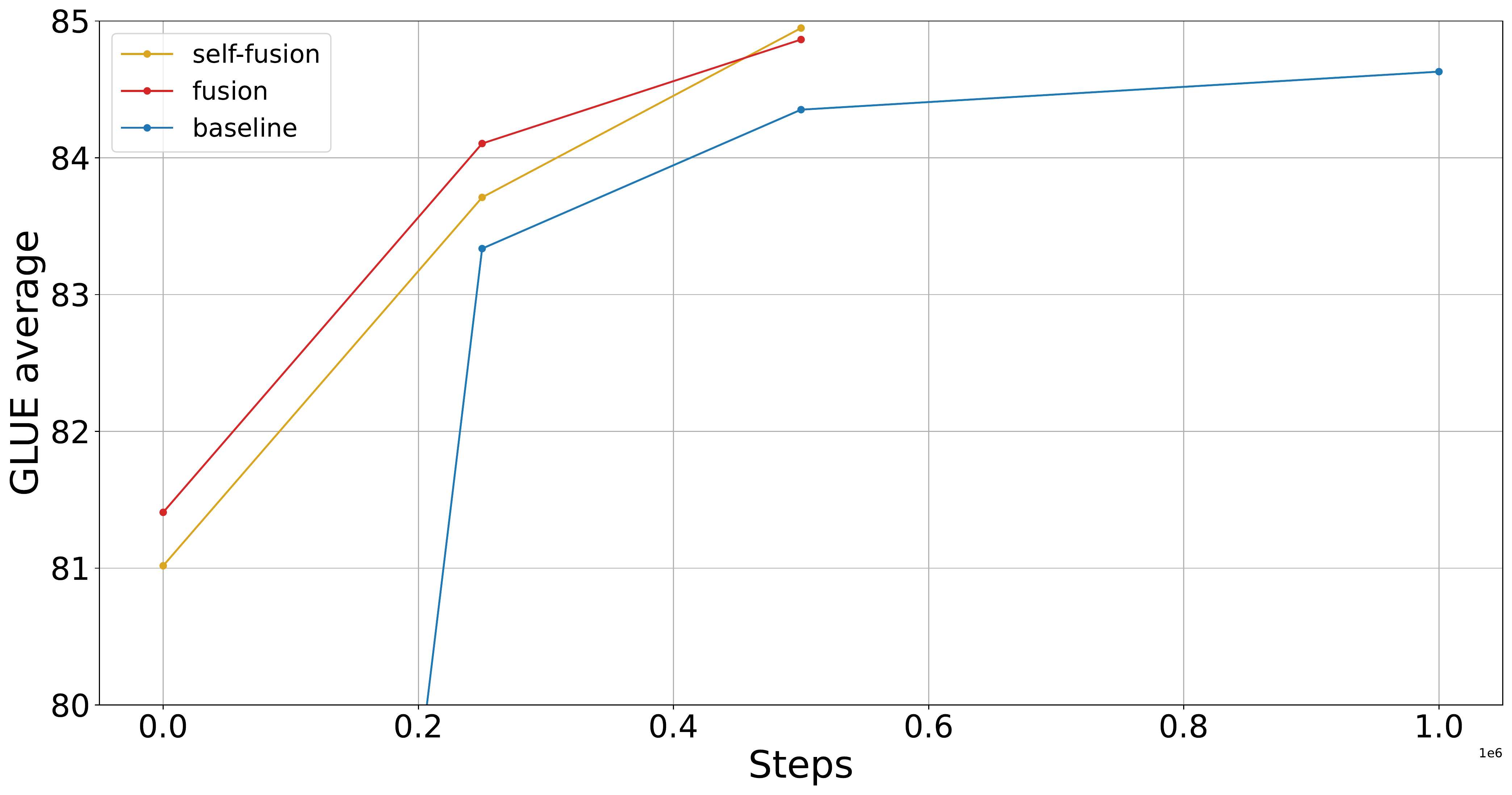}
\caption{Performance (Glue average - an average over many NLP tasks that score between 0 and 100) of the various models.}
\label{glue_fig}
\end{figure}

Our results indicate that enhancing a pretrained model's performance may simply require self-fusion before fine-tuning, without further pretraining. For instance, a \textsc{T5-Small} model, trained for 500K steps, when self-fused and fine-tuned, outperforms the small model trained to 1M steps before fine-tuning (81.01 vs 80.28). It is evident that the extra parameters from self-fusion benefit NLP tasks more than extended pretraining.

Next, the results above also suggest that deep fusion can lead to faster training to better performance, when fine-tuning on downstream NLP tasks. However, while in pretrain, the training curves of fusion and self fusion look similar, we can see that for downstream tasks, fusion maintains higher performance throughout until convergence (here, both models converge to similar performance).
\begin{table}[h]
\centering
\begin{footnotesize}
\begin{tabular}{ccccc}
\textbf{Model / time} & \textbf{Fusion} & \textbf{Post fusion} & \textbf{Time} & \textbf{GLUE} \\
\toprule
    \texttt{baseline} & 0 steps & 1M steps &39.2h & 84.74\\
     & 0h & 39.2h & & \\ \midrule
    \texttt{fusion-prop} & 500k steps & 500k steps & 37.9h &84.86 \\
     & 2$\times$8h & 21.9h & & \\ \midrule
    \texttt{self-fusion} & 500k steps & 500k steps & \textbf{29.9h} &  \textbf{84.94}\\
     & 8h & 21.9h &  & \\\bottomrule
\end{tabular}
\end{footnotesize}
\caption{Compute time in hours (TPU V3 4x4 topology).}
\label{hour_performance}
\end{table}

The total compute saving is about 24\% TPU time for this configuration as presented in Table \ref{hour_performance}. Even though we trained for less time, the final performance was slightly better than the baseline.
\subsection{Training Dynamics}
In this section, our experiments will show how the post-fusion learning rate affects the performance of the learning, as well as the parameters. To understand how the learning rate affects performance, we ran the normal T5 learning rate schedule with various offsets. A positive offset means we start from lower maximal learning rate while a negative offset means we maintain the maximal learning rate for longer before dropping. Figure~\ref{lr_sparse_fig} displays the performance at the extremes (high positive offset vs high negative one). The full tested spectrum of offsets tested appears in Figure~\ref{lr_all_fig} in Appendix~\ref{apndx:train_dynamics}.

\begin{figure*}[t]
\centering
\includegraphics[scale=0.20]{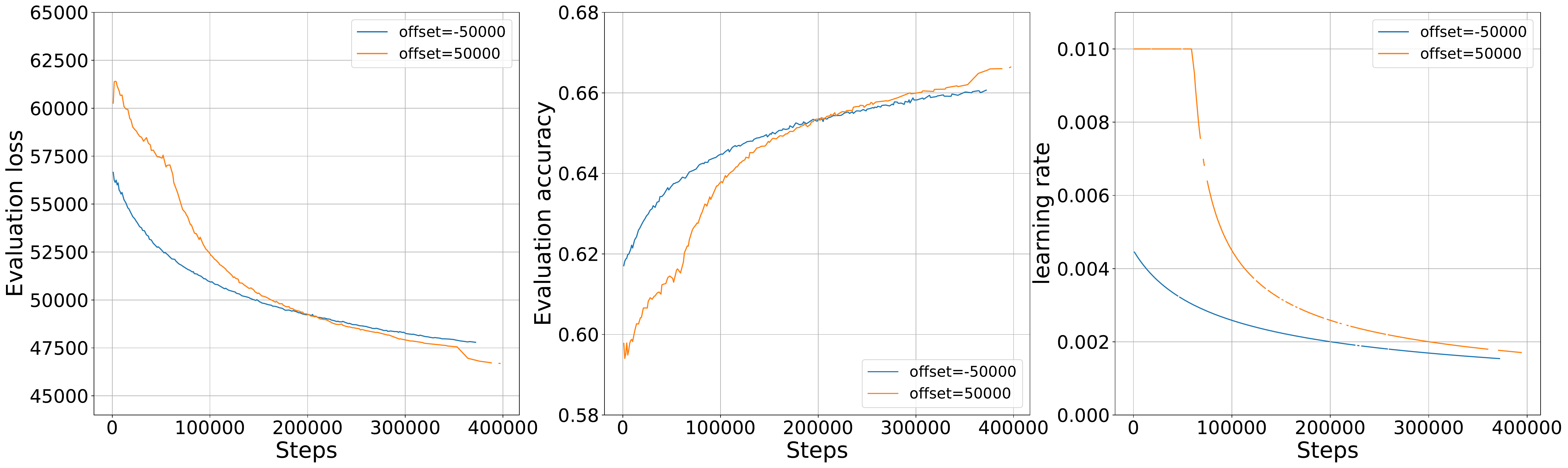}
\caption{Performance on T5 when applying large offsets to the learning rate schedule.}
\label{lr_sparse_fig}
\end{figure*}
\begin{figure*}[t]
\centering
\begin{tabular}{c}
\begin{minipage}{.125\linewidth}
    \includegraphics[width=\linewidth]{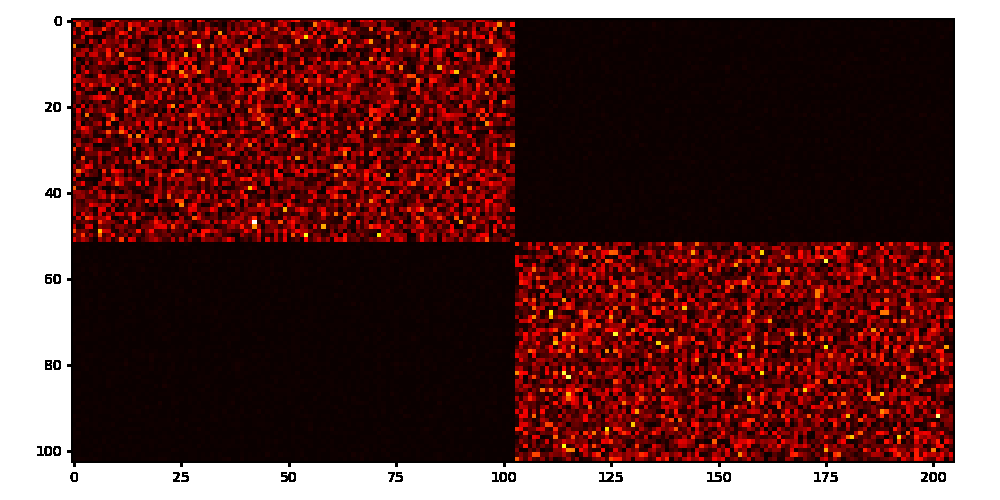}
\end{minipage}%
\begin{minipage}{.125\linewidth}
    \includegraphics[width=\linewidth]{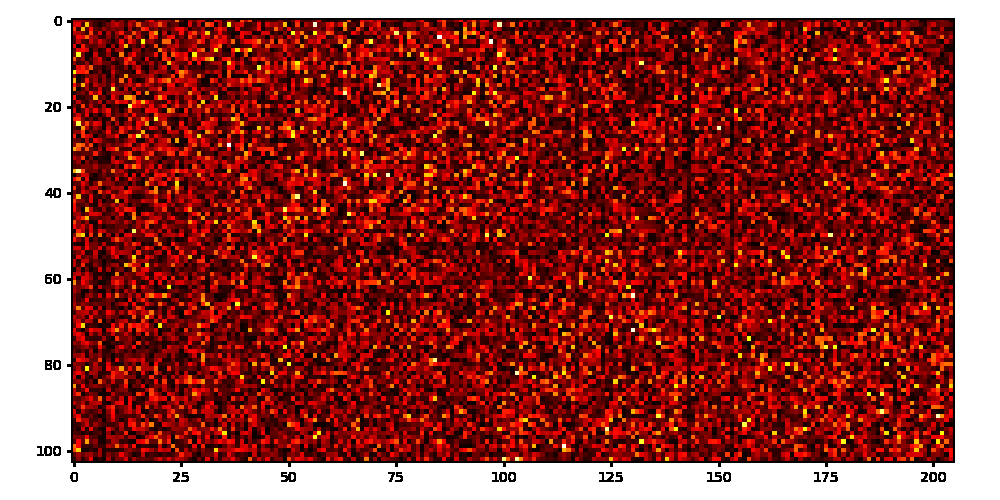}
\end{minipage}%
\begin{minipage}{.125\linewidth}
    \includegraphics[width=\linewidth]{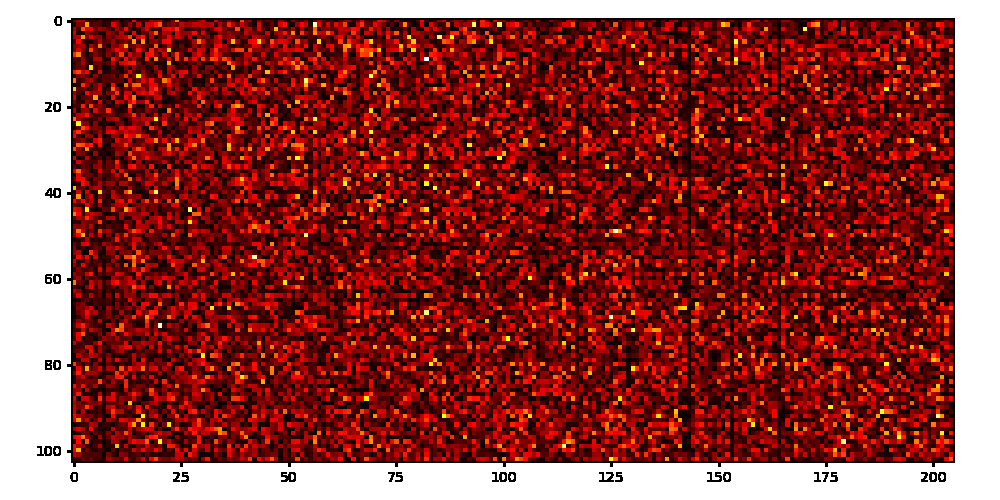}
\end{minipage}%
\begin{minipage}{.125\linewidth}
    \includegraphics[width=\linewidth]{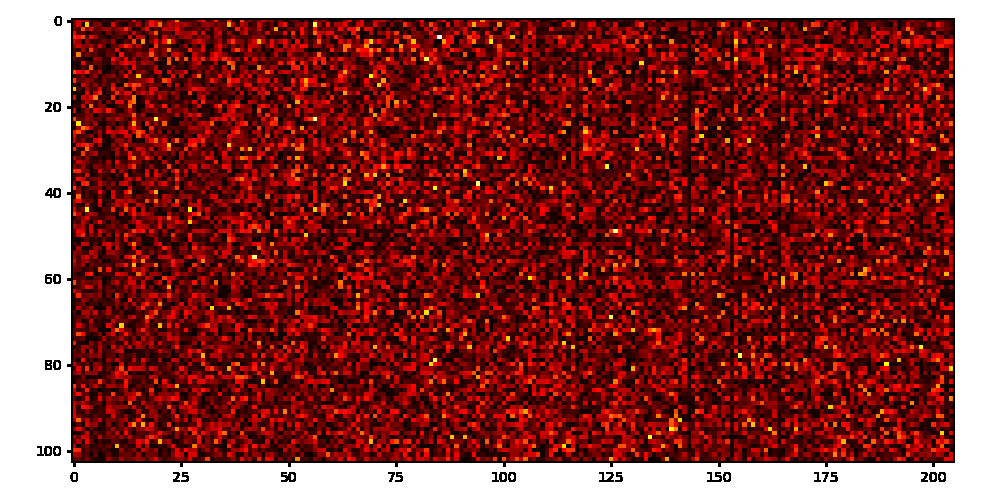}
\end{minipage}%
\begin{minipage}{.125\linewidth}
    \includegraphics[width=\linewidth]{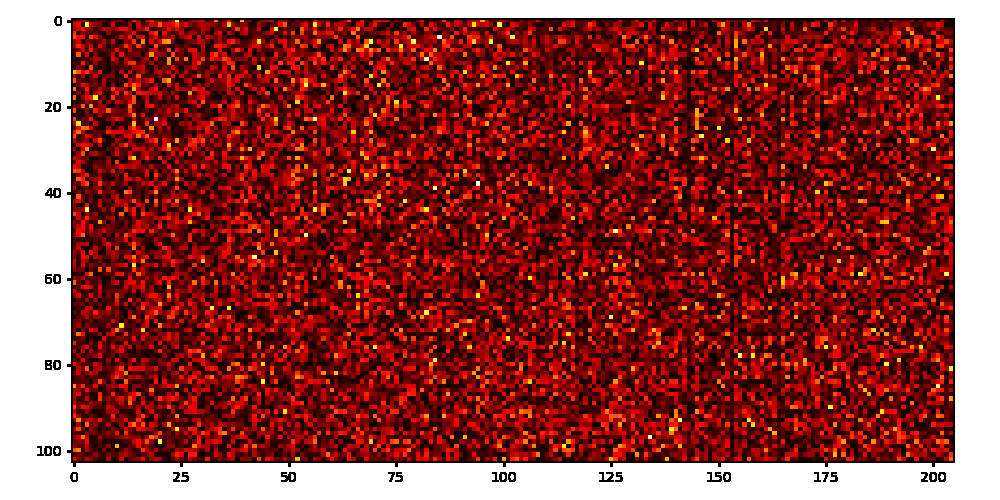}
\end{minipage}%
\begin{minipage}{.125\linewidth}
    \includegraphics[width=\linewidth]{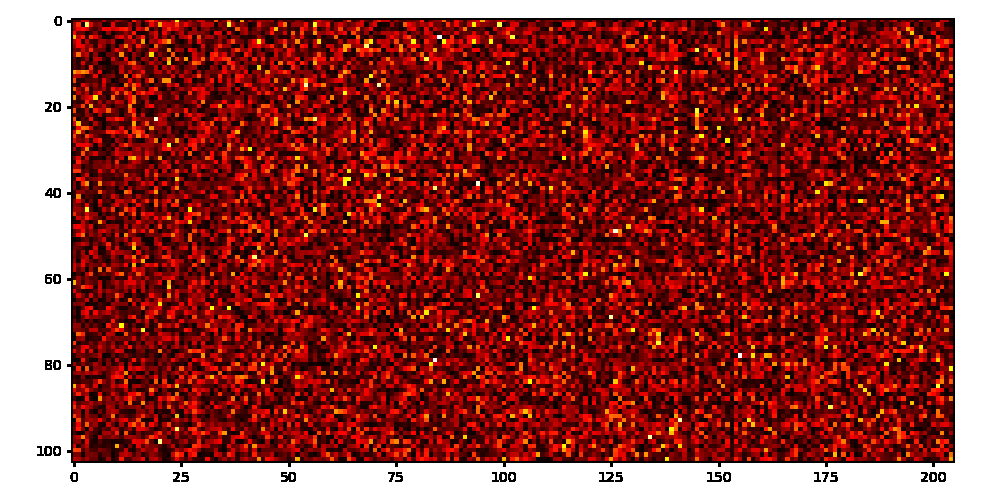}
\end{minipage}%
\begin{minipage}{.125\linewidth}
    \includegraphics[width=\linewidth]{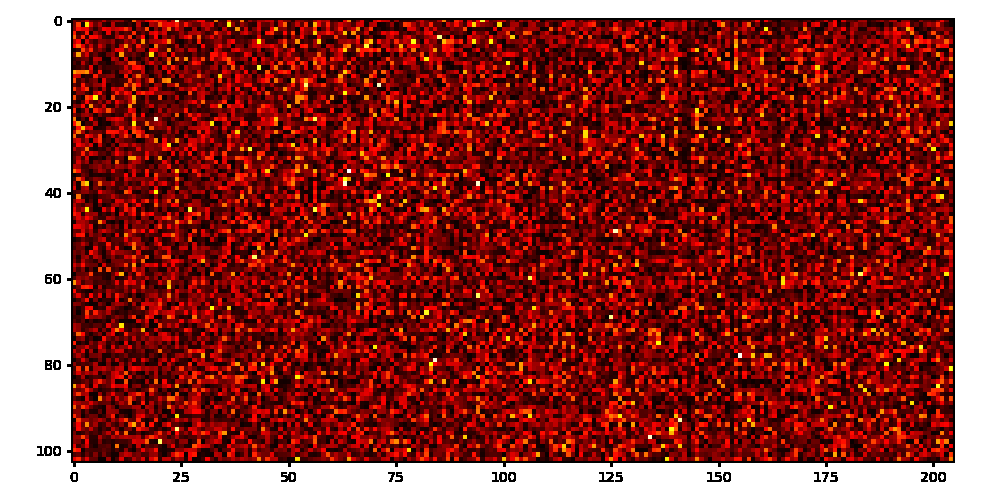}
\end{minipage}%
\begin{minipage}{.125\linewidth}
    \includegraphics[width=\linewidth]{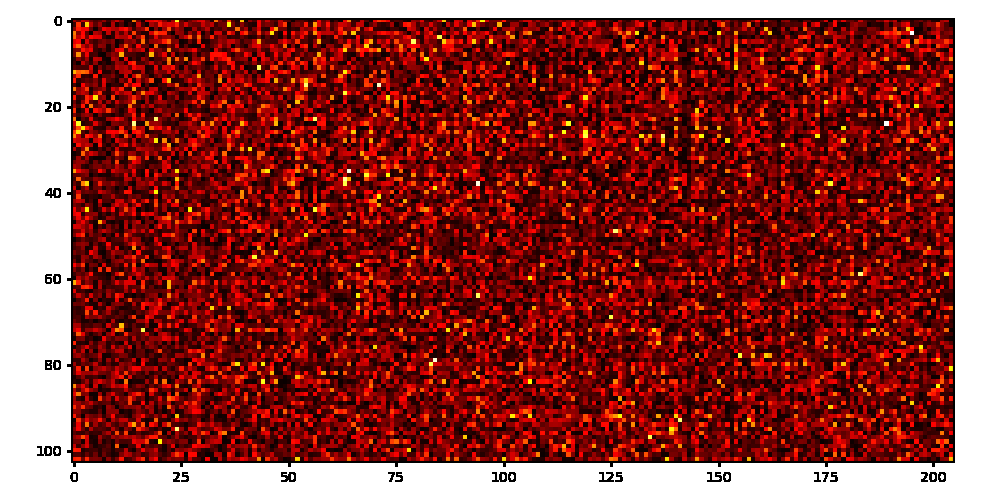}
\end{minipage} \\
\begin{minipage}{.125\linewidth}
    \includegraphics[width=\linewidth]{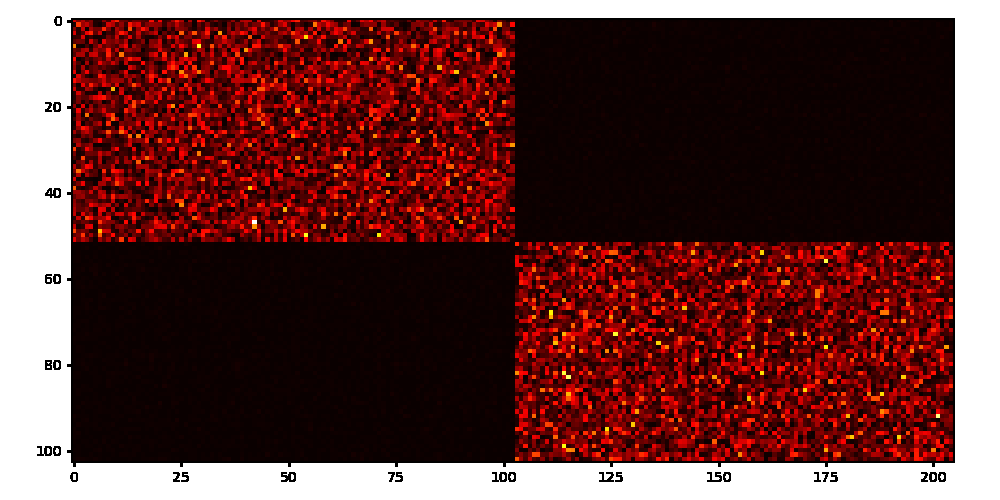}
\end{minipage}%
\begin{minipage}{.125\linewidth}
    \includegraphics[width=\linewidth]{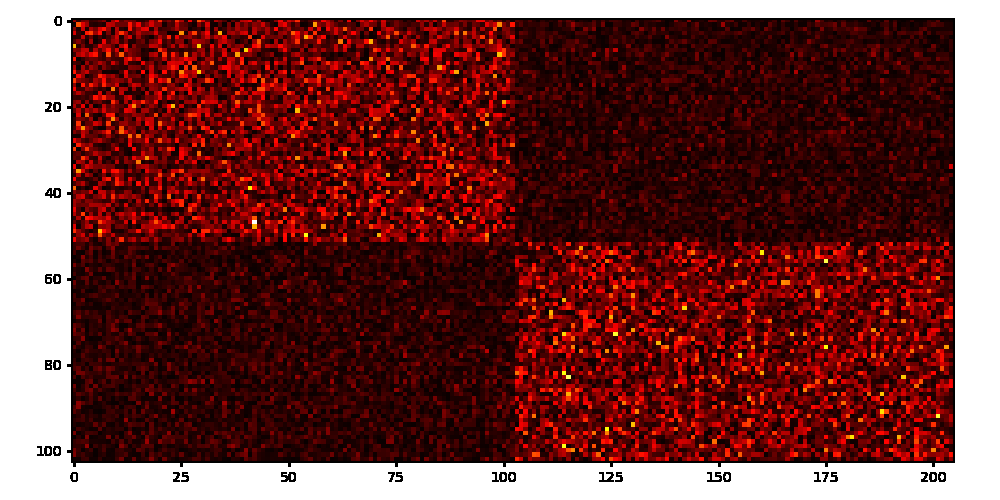}
\end{minipage}%
\begin{minipage}{.125\linewidth}
    \includegraphics[width=\linewidth]{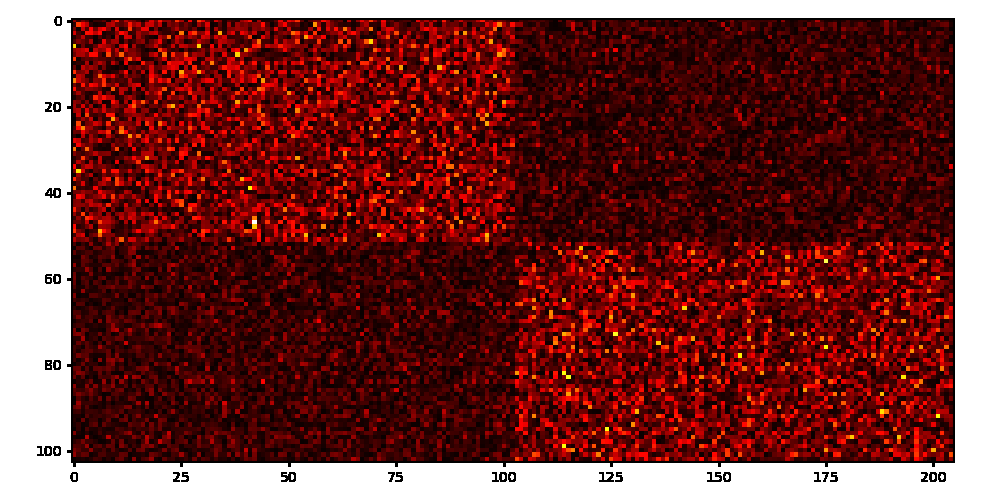}
\end{minipage}%
\begin{minipage}{.125\linewidth}
    \includegraphics[width=\linewidth]{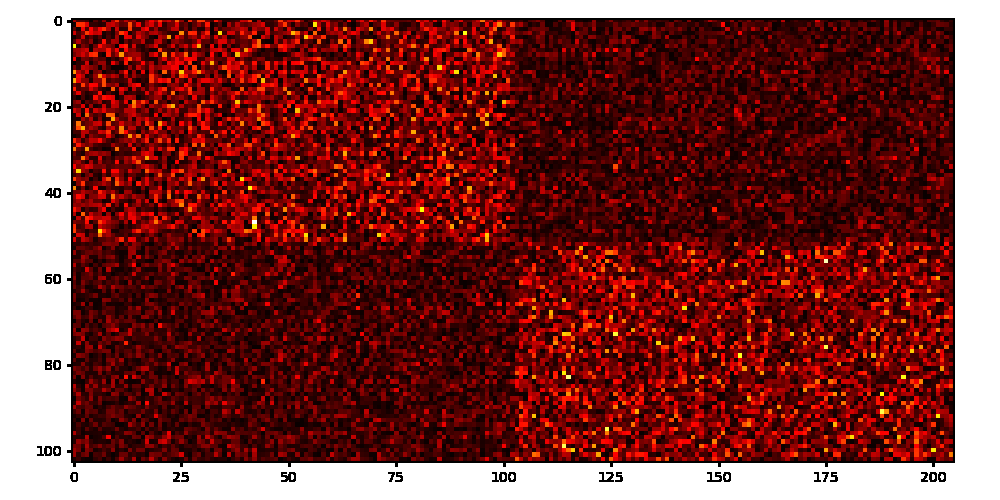}
\end{minipage}%
\begin{minipage}{.125\linewidth}
    \includegraphics[width=\linewidth]{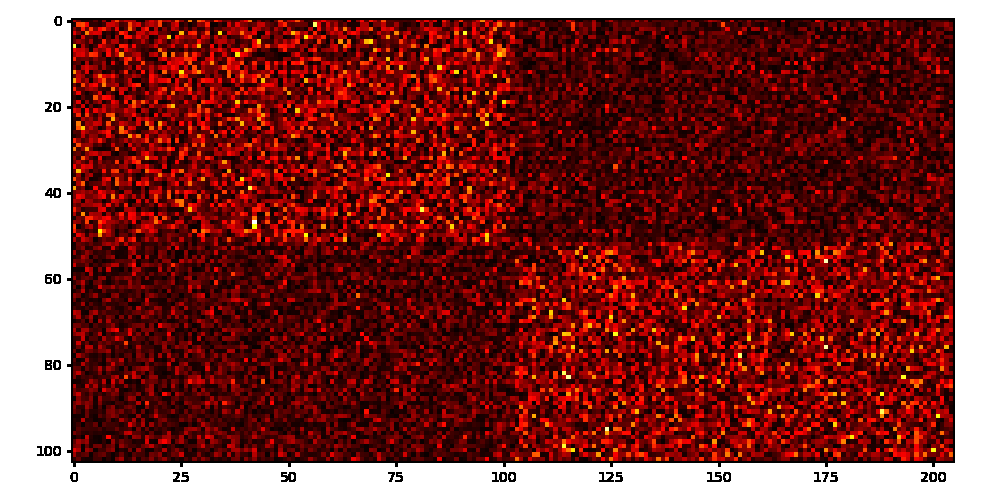}
\end{minipage}%
\begin{minipage}{.125\linewidth}
    \includegraphics[width=\linewidth]{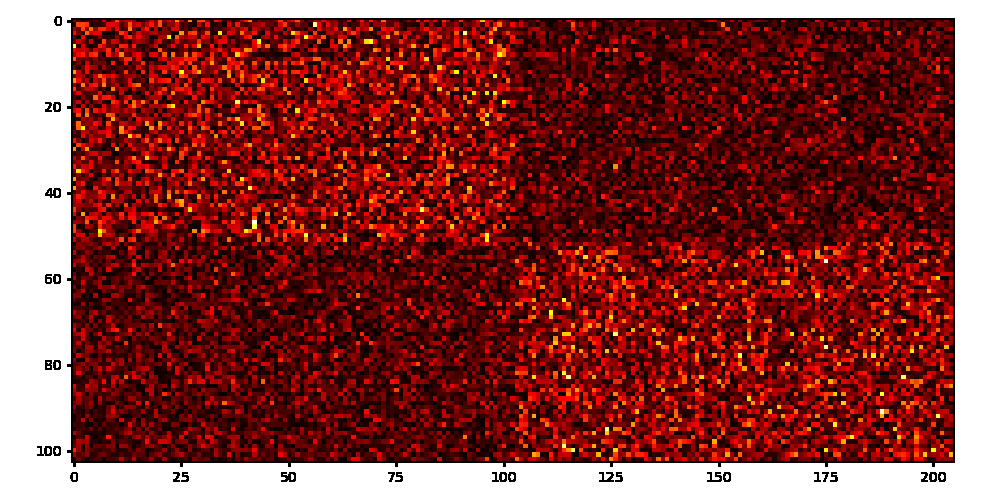}
\end{minipage}%
\begin{minipage}{.125\linewidth}
    \includegraphics[width=\linewidth]{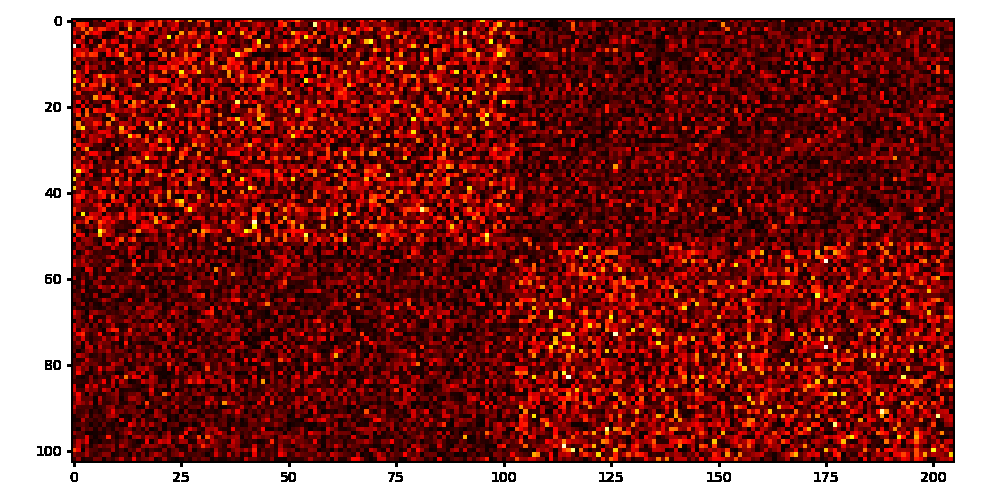}
\end{minipage}%
\begin{minipage}{.125\linewidth}
    \includegraphics[width=\linewidth]{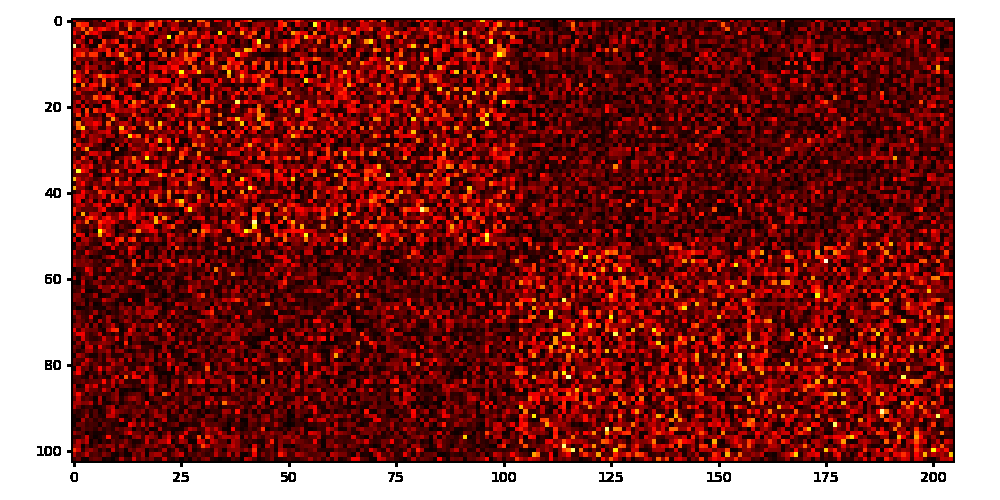}
\end{minipage}%
\end{tabular}
\caption{Heat maps of the first layers' feed forward kernel every 50K steps (left to right till 400K steps training). The upper row is for high learning rates (highly negative offset: -50K), while the lower row is displaying the heat maps for low learning rates (highly positive offset: +50K). Figure~\ref{fig:zoomin_heatmap} in the Appendix zooms in on the last heat map in the first row showcasing how the replicas of the smaller model diverge.}
\label{fig:heatmaps}
\end{figure*}

While it is clear that higher learning rates toward the beginning benefit learning in the long run, we observe they hurt performance at first. This is in line with Section~\ref{theory} where we show that larger Lie bracket values bias the updates to unlock capacity, but at the short term cost of higher objective loss. Adding a negative offset resulted in better performance, but led to instabilities, and thus the previous experiment did not introduce offsets to the learning rate schedule.

To further understand the drop in performance for low learning rates, we use heat maps to visualize the kernels from the first feed forward layer. Figure~\ref{fig:heatmaps} shows the kernel when the learning rate is high and low. It is evident that after 400K steps, the parameters initialized to zero were not able to catch up in magnitude with the already optimized ones at expansion point. This observation is further aligned with Section~\ref{theory} where we note that high learning rates lead to high Lie bracket values, thereby unlocking the potential of the extra parameters.

\subsection{When to Perform Fusion}
Suppose we are given $X$ hours budget of TPU/GPU time, and suppose we would like to grow the network once through training (via self deep fusion), we wanted to answer the question: at which step is it optimal to perform self deep fusion?

We performed experiments training \textsc{T5-Medium} (Table~\ref{tab:models_dim1} in Appendix~\ref{apndx:first_experiment}) with various budgets: 50, 60 and 70 hours. The final model was obtained by first training \textsc{T5-Small} and apply self deep fusion at step $x$ during training for various possible steps $x$ (see Figure~\ref{fig:time_to_fuse}). In the graph, each point represents a different training, where the X-axis represents the step (in 1000s) in which we applied self deep fusion, and the Y-axis represents the final accuracy of the model.

We found that selecting the optimal time to fuse for a given time budget requires some balance.

In particular: 
\begin{itemize}
    \item If the small model has not been trained enough the final fused model will under-perform.
    \item If the small model is trained and has fully converged, the small model is allotted too much of the budget and the final model shows a slight degradation from optimal. 
\end{itemize} 

For the \textsc{T5-SMALL} model, fusing at ~500k steps performs well (Figure~\ref{fig:time_to_fuse}). We note in our experiments the optimal number of steps for the small model was independent of the total budget -- a confirmation of the importance of the information transfer when fusing from small to the larger models.

\begin{figure}[h]
\includegraphics[width=\linewidth]{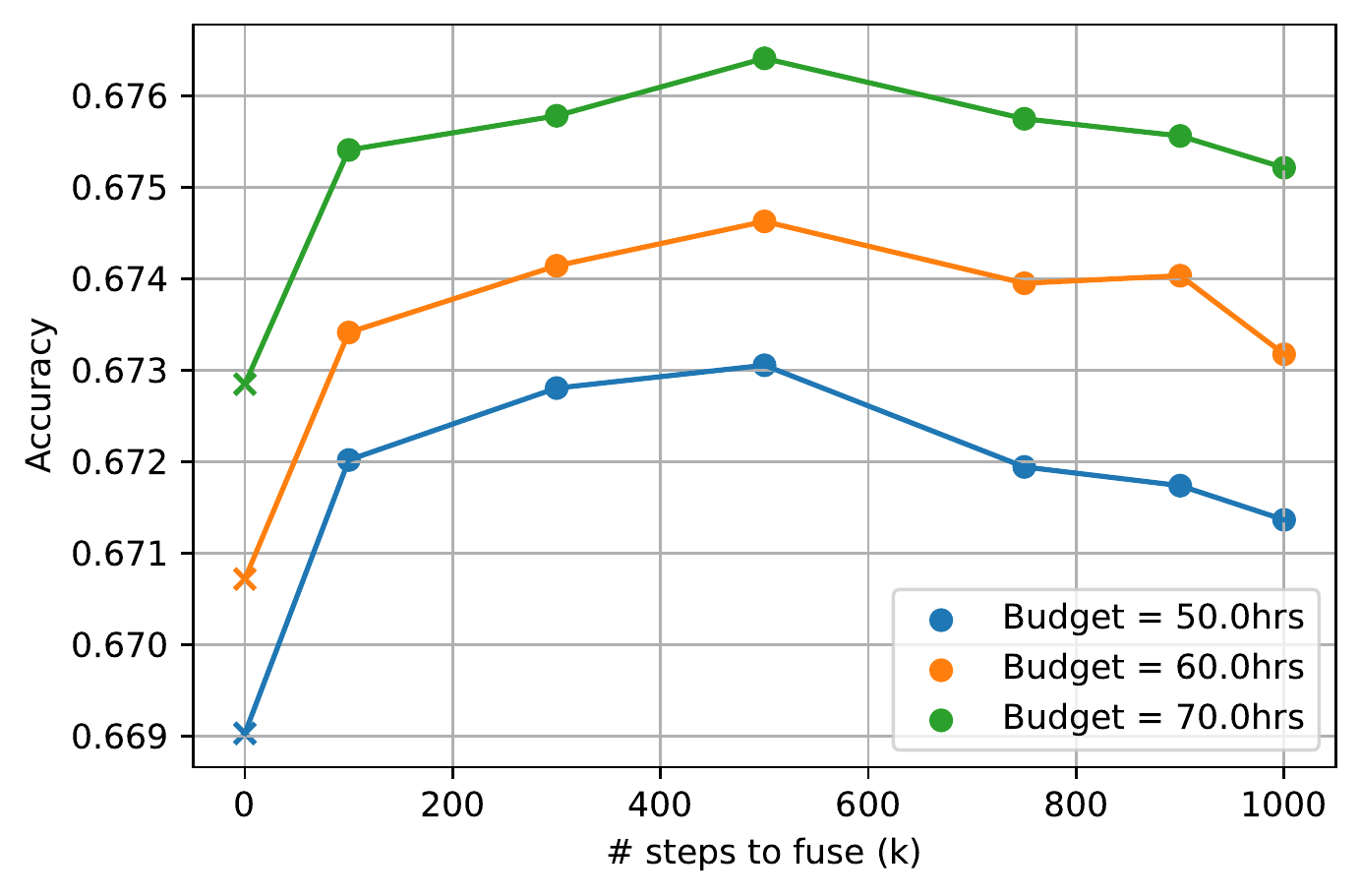}
\caption{Step to fuse. Displays the final accuracy of the fused model as a function step number when fusion was done (in 1000s). Each color represents a specific total budget allotted to training. The left-most points represent a randomly initialized baseline. \vspace{-0.5cm}}
\label{fig:time_to_fuse}
\end{figure}

\section{Discussion and Conclusion}\label{discussion}
In this paper, we present a new technique for improving the training process of large models. Our technique, called deep fusion, combines multiple models into a single model that can be trained more efficiently. We demonstrate how model fusion can be used to reduce the restrictions of distributed training, save on overall compute costs, and improve model performance.

Additionally, we introduced a new theoretical framework to illuminate the dynamics of mid-training network growth. This framework offers valuable insights to aid in the design and comprehension of network growing algorithms, easing the potential complexity they introduce when training already intricate large language models.

In our experiments we fused models trained on the same data with identical architectures. While fusion has immediate training advantages, further research is needed to understand the implications and possible applications of fusing models trained on different sources and distinct architectures.

For example, it would be interesting to explore if transfer learning occurs when fusing models trained in different domains. Additional insights could arise by investigating the characteristics of models fused from submodels differing in dimensionality. For instance, one model could be attention-heavy, while another could be MLP-heavy. Finally, one could explore model fusion when the models are trained on different sequence lengths. This could also lead to efficiency improvements, as lower-length models train faster.
\vspace{-0.25cm}


\section*{Impact statement}
This paper presents work whose goal is to advance the field of Machine Learning. There are many potential societal consequences of our work, none which we feel must be specifically highlighted here.

\section*{Acknowledgements}
The authors would like to thank Michael Munn for various stimulating conversations and insights throughout the
course of this work.

\bibliographystyle{plainnat}
\bibliography{references}

\begin{thebibliography}{42}
\providecommand{\natexlab}[1]{#1}
\providecommand{\url}[1]{\texttt{#1}}
\expandafter\ifx\csname urlstyle\endcsname\relax
  \providecommand{\doi}[1]{doi: #1}\else
  \providecommand{\doi}{doi: \begingroup \urlstyle{rm}\Url}\fi

\bibitem[Barba et~al.(2021)Barba, Jaggi, and Dandi]{barba2021federated}
Luis Barba, Martin Jaggi, and Yatin Dandi.
\newblock Implicit gradient alignment in distributed and federated learning.
\newblock In \emph{AAAI Conference on Artificial Intelligence}, AAAI'22, 2021.

\bibitem[Barrett and Dherin(2021)]{igr}
David G.~T. Barrett and Benoit Dherin.
\newblock Implicit gradient regularization.
\newblock In \emph{ICLR}, 2021.

\bibitem[Belkin et~al.(2019)Belkin, Hsu, Ma, and Mandal]{belkin2019reconciling}
Mikhail Belkin, Daniel Hsu, Siyuan Ma, and Soumik Mandal.
\newblock Reconciling modern machine-learning practice and the classical
  bias-variance trade-off.
\newblock \emph{Proceedings of the National Academy of Sciences}, 116\penalty0
  (32):\penalty0 15849--15854, 2019.

\bibitem[Brown et~al.(2020)Brown, Mann, Ryder, Subbiah, Kaplan, Dhariwal,
  Neelakantan, Shyam, Sastry, Askell, et~al.]{brown2020language}
Tom~B Brown, Benjamin Mann, Nick Ryder, Melanie Subbiah, Jared Kaplan, Prafulla
  Dhariwal, Arvind Neelakantan, Pranav Shyam, Girish Sastry, Amanda Askell,
  et~al.
\newblock Language models are few-shot learners.
\newblock In \emph{Advances in Neural Information Processing Systems}, pages
  14182--14193, 2020.

\bibitem[Cattaneo et~al.(2023)Cattaneo, Klusowski, and
  Shigida]{cattaneo2023implicit_bias_adam}
Matias Cattaneo, Jason Klusowski, and Boris Shigida.
\newblock On the implicit bias of adam.
\newblock \emph{arXiv:2309.00079}, 2023.

\bibitem[Chowdhery et~al.(2022)Chowdhery, Narang, Devlin, Bosma, Mishra,
  Roberts, Barham, Chung, Sutton, Gehrmann, Schuh, Shi, Tsvyashchenko, Maynez,
  Rao, Barnes, Tay, Shazeer, Prabhakaran, Reif, Du, Hutchinson, Pope, Bradbury,
  Austin, Isard, Gur-Ari, Yin, Duke, Levskaya, Ghemawat, Dev, Michalewski,
  Garcia, Misra, Robinson, Fedus, Zhou, Ippolito, Luan, Lim, Zoph, Spiridonov,
  Sepassi, Dohan, Agrawal, Omernick, Dai, Pillai, Pellat, Lewkowycz, Moreira,
  Child, Polozov, Lee, Zhou, Wang, Saeta, Diaz, Firat, Catasta, Wei,
  Meier-Hellstern, Eck, Dean, Petrov, and Fiedel]{chowdhery2022palm}
Aakanksha Chowdhery, Sharan Narang, Jacob Devlin, Maarten Bosma, Gaurav Mishra,
  Adam Roberts, Paul Barham, Hyung~Won Chung, Charles Sutton, Sebastian
  Gehrmann, Parker Schuh, Kensen Shi, Sasha Tsvyashchenko, Joshua Maynez,
  Abhishek Rao, Parker Barnes, Yi~Tay, Noam Shazeer, Vinodkumar Prabhakaran,
  Emily Reif, Nan Du, Ben Hutchinson, Reiner Pope, James Bradbury, Jacob
  Austin, Michael Isard, Guy Gur-Ari, Pengcheng Yin, Toju Duke, Anselm
  Levskaya, Sanjay Ghemawat, Sunipa Dev, Henryk Michalewski, Xavier Garcia,
  Vedant Misra, Kevin Robinson, Liam Fedus, Denny Zhou, Daphne Ippolito, David
  Luan, Hyeontaek Lim, Barret Zoph, Alexander Spiridonov, Ryan Sepassi, David
  Dohan, Shivani Agrawal, Mark Omernick, Andrew~M. Dai,
  Thanumalayan~Sankaranarayana Pillai, Marie Pellat, Aitor Lewkowycz, Erica
  Moreira, Rewon Child, Oleksandr Polozov, Katherine Lee, Zongwei Zhou, Xuezhi
  Wang, Brennan Saeta, Mark Diaz, Orhan Firat, Michele Catasta, Jason Wei,
  Kathy Meier-Hellstern, Douglas Eck, Jeff Dean, Slav Petrov, and Noah Fiedel.
\newblock Palm: Scaling language modeling with pathways, 2022.

\bibitem[Devlin et~al.(2018)Devlin, Chang, Lee, and Toutanova]{devlin2018bert}
Jacob Devlin, Ming-Wei Chang, Kenton Lee, and Kristina Toutanova.
\newblock Bert: Pre-training of deep bidirectional transformers for language
  understanding.
\newblock In \emph{Proceedings of the 2019 Conference of the North American
  Chapter of the Association for Computational Linguistics: Human Language
  Technologies}, pages 4171--4186, 2018.

\bibitem[Dherin(2023)]{dherin2023implicit}
Benoit Dherin.
\newblock Implicit biases in multitask and continual learningfrom a backward
  error analysis perspective.
\newblock In \emph{NeurIPS, Mathematics of Modern Machine Learning Workshop},
  2023.

\bibitem[Frankle and Carbin(2018)]{frankle2018lottery}
Jonathan Frankle and Michael Carbin.
\newblock The lottery ticket hypothesis: Finding sparse, trainable neural
  networks.
\newblock In \emph{Proceedings of the International Conference on Learning
  Representations}, 2018.

\bibitem[Frankle and Carbin(2019)]{frankle2019lottery}
Jonathan Frankle and Michael Carbin.
\newblock The lottery ticket hypothesis at scale.
\newblock In \emph{arXiv preprint arXiv:1903.01611}, 2019.

\bibitem[Ganesh et~al.(2021)Ganesh, Chen, Lou, Khan, Yang, Sajjad, Nakov, Chen,
  and Winslett]{Ganesh21compress}
Prakhar Ganesh, Yao Chen, Xin Lou, Mohammad~Ali Khan, Yin Yang, Hassan Sajjad,
  Preslav Nakov, Deming Chen, and Marianne Winslett.
\newblock {Compressing Large-Scale Transformer-Based Models: A Case Study on
  BERT}.
\newblock \emph{Transactions of the Association for Computational Linguistics},
  9:\penalty0 1061--1080, 09 2021.
\newblock ISSN 2307-387X.
\newblock \doi{10.1162/tacl_a_00413}.
\newblock URL \url{https://doi.org/10.1162/tacl\_a\_00413}.

\bibitem[Gao et~al.(2023)Gao, Zhibong, Zhou, Kang, and
  Chaudhari]{gao2023diffusion_bea}
Yansong Gao, Pan Zhibong, Xin Zhou, Le~Kang, and Pratik Chaudhari.
\newblock Fast diffusion probabilistic model sapling through the lens of
  backward error analysis.
\newblock \emph{arXiv:2304.11446}, 2023.

\bibitem[Ghosh et~al.(2023)Ghosh, Lyu, Zhang, and Wang]{ghosh2023implicit}
Avrajit Ghosh, He~Lyu, Xitong Zhang, and Rongrong Wang.
\newblock Implicit regularization in heavy-ball momentum accelerated stochastic
  gradient descent.
\newblock \emph{ICLR}, 2023.

\bibitem[Glorot and Bengio(2010)]{glorot2010understanding}
Xavier Glorot and Yoshua Bengio.
\newblock Understanding the difficulty of training deep feedforward neural
  networks.
\newblock In \emph{Proceedings of the Thirteenth International Conference on
  Artificial Intelligence and Statistics}, pages 249--256, 2010.

\bibitem[Gong et~al.(2019)Gong, He, Li, Qin, Wang, and Liu]{bert_pstacking}
L.~Gong, D.~He, Z.~Li, T.~Qin, L.~Wang, and T.~Liu.
\newblock Efficient training of bert by progressively stacking.
\newblock In \emph{ICML 2019}, 2019.
\newblock URL \url{https://proceedings.mlr.press/v97/gong19a.html}.

\bibitem[Graves(2016)]{graves2016adaptive}
Alex Graves.
\newblock Adaptive computation time for recurrent neural networks.
\newblock In \emph{Proceedings of the International Conference on Learning
  Representations}, 2016.

\bibitem[Gu et~al.(2021{\natexlab{a}})Gu, Hu, Zhao, Lin, Cheng, Wang, and
  Wan]{gu2021palm}
Jiatao Gu, Jianqiang Hu, Tong Zhao, Ying Lin, Xiuying Cheng, Lijun Wang, and
  Xiang Wan.
\newblock Palm: Pre-training an autoencoding and autoregressive language model
  for context-conditioned generation.
\newblock In \emph{Proceedings of the 2021 Conference on Empirical Methods in
  Natural Language Processing}, pages 2643--2662, 2021{\natexlab{a}}.

\bibitem[Gu et~al.(2021{\natexlab{b}})Gu, Liu, Yu, Li, Chen, and
  Han]{transformer_grow_p_bert}
Xiaotao Gu, Liyuan Liu, Hongkun Yu, Jing Li, Chen Chen, and Jiawei Han.
\newblock On the transformer growth for progressive bert training.
\newblock In \emph{In Proceedings of the 2021 Conference of the North American
  Chapter of the Association for Computational Linguistics: Human Language
  Technologies}, page 5174–5180, 2021{\natexlab{b}}.

\bibitem[Hairer et~al.(2006)Hairer, Lubich, and Wanner]{hairer2006}
Ernst Hairer, Christian Lubich, and Gerhard Wanner.
\newblock \emph{Geometric numerical integration}, volume~31 of \emph{Springer
  Series in Computational Mathematics}.
\newblock Springer-Verlag, Berlin, second edition, 2006.
\newblock ISBN 3-540-30663-3; 978-3-540-30663-4.
\newblock Structure-preserving algorithms for ordinary differential equations.

\bibitem[He et~al.(2015)He, Zhang, Ren, and Sun]{he2015delving}
Kaiming He, Xiangyu Zhang, Shaoqing Ren, and Jian Sun.
\newblock Delving deep into rectifiers: Surpassing human-level performance on
  imagenet classification.
\newblock In \emph{Proceedings of the IEEE International Conference on Computer
  Vision}, pages 1026--1034, 2015.

\bibitem[Kaddour et~al.(2023)Kaddour, Key, Nawrot, Minervini, and
  Kusner]{refute_growing}
Jean Kaddour, Oscar Key, Piotr Nawrot, Pasquale Minervini, and Matt~J. Kusner.
\newblock No train no gain: Revisiting efficient training algorithms for
  transformer-based language models.
\newblock In \emph{Neurips 2023}, 2023.
\newblock URL \url{https://arxiv.org/pdf/2307.06440.pdf}.

\bibitem[Kaplan et~al.(2020)Kaplan, McCandlish, Henighan, Brown, Chess, Child,
  Gray, Radford, Wu, and Amodei]{kaplan2020scaling}
Jared Kaplan, Sam McCandlish, Tom Henighan, Tom~B Brown, Benjamin Chess, Rewon
  Child, Scott Gray, Alec Radford, Jeffrey Wu, and Dario Amodei.
\newblock Scaling laws for neural language models.
\newblock \emph{arXiv preprint arXiv:2001.08361}, 2020.

\bibitem[Kwon et~al.(2022)Kwon, Kim, Mahoney, Hassoun, Keutzer, and
  Gholami]{kwon2022a}
Woosuk Kwon, Sehoon Kim, Michael~W. Mahoney, Joseph Hassoun, Kurt Keutzer, and
  Amir Gholami.
\newblock A fast post-training pruning framework for transformers.
\newblock In Alice~H. Oh, Alekh Agarwal, Danielle Belgrave, and Kyunghyun Cho,
  editors, \emph{Advances in Neural Information Processing Systems}, 2022.
\newblock URL \url{https://openreview.net/forum?id=0GRBKLBjJE}.

\bibitem[Lee(2022)]{lee2012smooth_manifolds}
John Lee.
\newblock Introduction to smooth manifolds.
\newblock In \emph{Springer}, 2022.

\bibitem[Li et~al.(2023)Li, Yao, Jiang, Fang, Meng, Fan, Han, Li, Du, Qin,
  Zhang, Sun, and Wang]{100_k_dollars}
Xiang Li, Yiqun Yao, Xin Jiang, Xuezhi Fang, Xuying Meng, Siqi Fan, Peng Han,
  Jing Li, Li~Du, Bowen Qin, Zheng Zhang, Aixin Sun, and Yequan Wang.
\newblock Flm-101b: An open llm and how to train it with \$100k budget.
\newblock In \emph{Arxiv}, 2023.
\newblock URL \url{https://arxiv.org/pdf/2309.03852.pdf}.

\bibitem[Micikevicius et~al.(2017)Micikevicius, Narang, Alben, Diamos, Elsen,
  Garcia, Ginsburg, Houston, Kuchaiev, Venkatesh,
  et~al.]{micikevicius2017mixed}
Paulius Micikevicius, Sharan Narang, Jonah Alben, Gregory Diamos, Erich Elsen,
  David Garcia, Boris Ginsburg, Michael Houston, Oleksii Kuchaiev, Ganesh
  Venkatesh, et~al.
\newblock Mixed precision training.
\newblock In \emph{Proceedings of the International Conference on Learning
  Representations}, 2017.

\bibitem[Nakkiran et~al.(2020)Nakkiran, Kaplun, Bansal, Yang, Barak, and
  Sutskever]{nakkiran2020deep}
Preetum Nakkiran, Gal Kaplun, Yamini Bansal, Tristan Yang, Boaz Barak, and Ilya
  Sutskever.
\newblock Deep double descent: Where bigger models and more data hurt.
\newblock In \emph{Proceedings of the International Conference on Learning
  Representations}, 2020.

\bibitem[Narayanan et~al.(2021)Narayanan, Shoeybi, Casper, LeGresley, Patwary,
  Korthikanti, Vainbrand, Kashinkunti, Bernauer, Catanzaro, Phanishayee, and
  Zaharia]{Narayanan21large_scale_llm}
Deepak Narayanan, Mohammad Shoeybi, Jared Casper, Patrick LeGresley, Mostofa
  Patwary, Vijay Korthikanti, Dmitri Vainbrand, Prethvi Kashinkunti, Julie
  Bernauer, Bryan Catanzaro, Amar Phanishayee, and Matei Zaharia.
\newblock Efficient large-scale language model training on gpu clusters using
  megatron-lm.
\newblock In \emph{Proceedings of the International Conference for High
  Performance Computing, Networking, Storage and Analysis}, SC '21, New York,
  NY, USA, 2021. Association for Computing Machinery.
\newblock ISBN 9781450384421.
\newblock \doi{10.1145/3458817.3476209}.
\newblock URL \url{https://doi.org/10.1145/3458817.3476209}.

\bibitem[Radford et~al.(2019)Radford, Narasimhan, Salimans, and
  Sutskever]{radford2019language}
Alec Radford, Karthik Narasimhan, Tim Salimans, and Ilya Sutskever.
\newblock Language models are unsupervised multitask learners.
\newblock In \emph{OpenAI Blog}, 2019.

\bibitem[Raffel et~al.(2019)Raffel, Shazeer, Roberts, Lee, Narang, Matena,
  Zhou, Li, and Liu]{t5paper}
Colin Raffel, Noam Shazeer, Adam Roberts, Katherine Lee, Sharan Narang, Michael
  Matena, Yanqi Zhou, Wei Li, and Peter~J. Liu.
\newblock Exploring the limits of transfer learning with a unified text-to-text
  transformer.
\newblock \emph{CoRR}, 2019.
\newblock URL \url{http://arxiv.org/abs/1910.10683}.

\bibitem[Rosca et~al.(2021)Rosca, Wu, Dherin, and
  Barrett]{rosca2021discretization}
Mihaela Rosca, Yan Wu, Benoit Dherin, and David~G.T. Barrett.
\newblock Discretization drift in two-player games.
\newblock In \emph{ICML}, 2021.

\bibitem[Rusu et~al.(2016)Rusu, Rabinowitz, Desjardins, Soyer, Kirkpatrick,
  Kavukcuoglu, Pascanu, and Hadsell]{rusu2016progressive}
Andrei~A Rusu, Neil~C Rabinowitz, Guillaume Desjardins, Hubert Soyer, James
  Kirkpatrick, Koray Kavukcuoglu, Razvan Pascanu, and Raia Hadsell.
\newblock Progressive neural networks.
\newblock In \emph{Proceedings of the 30th International Conference on Neural
  Information Processing Systems}, pages 2212--2220, 2016.

\bibitem[Shen et~al.(2022)Shen, Walsh, Keutzer, Dodge, Peters, and
  Beltagy]{shen2022staged}
Sheng Shen, Pete Walsh, Kurt Keutzer, Jesse Dodge, Matthew Peters, and
  Iz~Beltagy.
\newblock Staged training for transformer language models.
\newblock In Kamalika Chaudhuri, Stefanie Jegelka, Le~Song, Csaba Szepesvari,
  Gang Niu, and Sivan Sabato, editors, \emph{Proceedings of the 39th
  International Conference on Machine Learning}, volume 162 of
  \emph{Proceedings of Machine Learning Research}, pages 19893--19908. PMLR,
  17--23 Jul 2022.
\newblock URL \url{https://proceedings.mlr.press/v162/shen22f.html}.

\bibitem[Smith et~al.(2021)Smith, Dherin, Barrett, and De]{smith2021on}
Samuel~L Smith, Benoit Dherin, David~G.T. Barrett, and Soham De.
\newblock On the origin of implicit regularization in stochastic gradient
  descent.
\newblock In \emph{ICLR}, 2021.

\bibitem[Touvron et~al.(2023)Touvron, Lavril, Izacard, Martinet, Lachaux,
  Lacroix, Rozière, Goyal, Hambro, Azhar, Rodriguez, Joulin, Grave, and
  Lample]{touvron2023llama}
Hugo Touvron, Thibaut Lavril, Gautier Izacard, Xavier Martinet, Marie-Anne
  Lachaux, Timothée Lacroix, Baptiste Rozière, Naman Goyal, Eric Hambro,
  Faisal Azhar, Aurelien Rodriguez, Armand Joulin, Edouard Grave, and Guillaume
  Lample.
\newblock Llama: Open and efficient foundation language models, 2023.

\bibitem[Wei et~al.(2016)Wei, Wang, Rui, and Chen]{wei2016network}
Chen Wei, Haoyu Wang, Yongyang Rui, and Changqing Chen.
\newblock Network morphism.
\newblock In \emph{Proceedings of the 33rd International Conference on
  International Conference on Machine Learning}, pages 2662--2671, 2016.

\bibitem[Yao et~al.(2023)Yao, Zhang, Li, and Wang]{twice_faster}
Yiqun Yao, Zheng Zhang, Jing Li, and Yequan Wang.
\newblock 2x faster language model pre-training via masked structural growth.
\newblock In \emph{Arxiv}, 2023.
\newblock URL \url{https://arxiv.org/pdf/2305.02869.pdf}.

\bibitem[You et~al.(2017{\natexlab{a}})You, Bulatov, Gitman, and
  Risteski]{you2017large}
Yang You, Yaroslav Bulatov, Igor Gitman, and Andrej Risteski.
\newblock Large batch training of convolutional networks.
\newblock In \emph{arXiv preprint arXiv:1708.03888}, 2017{\natexlab{a}}.

\bibitem[You et~al.(2017{\natexlab{b}})You, Gitman, and
  Ginsburg]{you2017scaling}
Yang You, Igor Gitman, and Boris Ginsburg.
\newblock Scaling sgd batch size to 32k for imagenet training.
\newblock In \emph{Deep Learning Scaling is Predictable, Empirically}, page~13,
  2017{\natexlab{b}}.

\bibitem[Zhang et~al.(2020)Zhang, Zhang, Ghosh, Li, Tasci, Heck, Zhang, and
  Kuo]{zhang2020classincremental}
Junting Zhang, Jie Zhang, Shalini Ghosh, Dawei Li, Serafettin Tasci, Larry
  Heck, Heming Zhang, and C.~C.~Jay Kuo.
\newblock Class-incremental learning via deep model consolidation, 2020.

\bibitem[Zhang et~al.(2022)Zhang, Zuo, Liang, Bukharin, He, Chen, and
  Zhao]{zhang2022platon}
Qingru Zhang, Simiao Zuo, Chen Liang, Alexander Bukharin, Pengcheng He, Weizhu
  Chen, and Tuo Zhao.
\newblock Platon: Pruning large transformer models with upper confidence bound
  of weight importance, 2022.

\bibitem[Zhou et~al.(2023)Zhou, Liu, Xu, Iyer, Sun, Mao, Ma, Efrat, Yu, Yu,
  Zhang, Ghosh, Lewis, Zettlemoyer, and Levy]{zhou2023lima}
Chunting Zhou, Pengfei Liu, Puxin Xu, Srini Iyer, Jiao Sun, Yuning Mao, Xuezhe
  Ma, Avia Efrat, Ping Yu, Lili Yu, Susan Zhang, Gargi Ghosh, Mike Lewis, Luke
  Zettlemoyer, and Omer Levy.
\newblock Lima: Less is more for alignment, 2023.

\end{thebibliography}

\begin{table*}[!t]
\centering
\begin{tiny}
\begin{tabularx}{\textwidth}{cXXXXXXXXXXXXX}
Model & Glue avg & COLA Matthew's & SST acc & MRPC f1 & MRPC acc & STS-b pearson & STS-b \mbox{spearman} & qqp acc & qqp f1 & MNLI-m & MNLI-mm & QNLI & RTE\\
\toprule
baseline 1m & 84.74 & 54.18 & 94.38 & 93.17 & 90.69 & 89.83 & 89.75	& 91.94 & 89.13 & 86.77 & 86.6 & 92.13 & 78.34 \\
baseline 1m & 84.45 & 52.95 & 93.92 & 93.12 & 90.44 & 89.49 & 89.35 & 91.94 & 89.14 & 86.94 & 86.58 & 92.26 & 77.98 \\
baseline 1m & 84.69 & 53.15 & 93.81 & 92.15 & 88.97 & 89.06 & 88.93 & 92.04 & 89.24 & 86.55 & 86.21 & 91.69 & 82.31\\ \midrule
Fusion 500k & 84.98 & 53.97 & 94.27 & 92.91 & 90.2 & 89.68 & 89.49 & 92.04 & 89.36 & 86.6 & 86.43 & 92.39 & 80.87 \\
Fusion 500k & 84.68 & 54.46 & 93.81 & 91.8 & 88.97 & 89.44 & 89.28 & 91.95 & 89.13 & 86.67 & 86.65 & 92.11 & 80.14 \\
Fusion 500k & 84.92 & 53.94 & 94.5 & 92.73 & 89.71 & 90.62 & 90.44 & 92.01 & 89.23 & 86.64 & 86.45 & 91.56 & 80.51 \\ \midrule
Self fusion 500K & 84.69 & 54.58 & 93.12 & 92.03 & 89.22 & 89.23 & 89.16 & 91.81 & 89.01 & 86.64 & 86.68 & 92.11 & 80.87 \\
Self fusion 500K & 85.19 & 55.82 & 93.69 & 93.1 & 90.44 & 89.9 & 89.73 & 92.04 & 89.34 & 86.8 & 86.31 & 92.31 & 80.87 \\
Self fusion 500K & 84.95 & 58 & 94.27 & 92.36 & 89.71 & 89.93 & 89.7 & 91.96 & 89.18 & 86.47 & 86.4 & 91.93 & 77.62 \\ \bottomrule
\end{tabularx}
\end{tiny}
\caption{Performance (Glue tasks) of the various models on downstream tasks.}
\label{tbl:full_glue}
\end{table*}


\newpage
\appendix

\section{Proofs }
\label{ap:1}

In this appendix, we provide proofs for Theorem~\ref{th:modified_loss} and Theorem~\ref{th:modified_equation}, which are restated below for convenience.

\marginbound*
\modifiedEquation*

\begin{proof}

We are expanding the second part of Eq.~\ref{eq:w_2} using a Taylor series expansion around $w_0$:
\begin{align*}
    w_2 = & w_0 + hF(w_0) + \alpha h G(w_0 + hF(w_0)) = \\
    = & w_0 + h(F(w_0) + \alpha G(w_0)) + \\
    & h^2\alpha\nabla G(w_0)F(w_0) + {\cal O}(h^3).
\end{align*}

We want a modified equation of the form:
\begin{equation}
\label{eq:modified_w}
    \dot{w} = H_0(w) + hH_1(w) + h^2 H_2(w) + \ldots.
\end{equation}

The Taylor series expansion of the modified equation solution is:
\begin{equation}
\label{eq:cont_w}
    w(h) = w + h\dot{w} + \frac{h^2}{2} \nabla \dot{w}\dot{w} + \ldots.
\end{equation}

Substituting Eq.~\ref{eq:modified_w} into Eq.~\ref{eq:cont_w}:
\begin{align*}
    w(h) = & w + h(H_0(w)+hH_1(w)) + \\
    & \frac{h^2}{2}H'_0(w)H_0(w)+{\cal O}(h^3) \\
    = & w + hH_0(w) + {\cal O}(h^3) + \\ 
    & h^2\left( H_1(w) + \frac{1}{2} H'_0(w)H_0(w) \right) 
\end{align*}

To have $w_1 = w(h)$ at first order, we obtain the following condition: $H_0(w) = F(w_0) + \alpha G(w_0)$ and,
\begin{align*}
  H_1(w) &+ \frac{1}{2}H'_0(w)H_0(w) = \alpha \nabla G(w_0)F(w_0) \\  
  H_1(w) &+ \frac{1}{2}(\nabla F(w_0) + \alpha\nabla G(w_0))( F(w_0) + \alpha G(w_0) ) = \\ &  \alpha \nabla G(w_0)F(w_0) \\
  H_1(w) & = \frac{\alpha}{2} ( \nabla G(w_0)F(w_0) - \nabla F(w_0)G(w_0) ) - \\ & \frac{1}{2} ( \nabla F(w_0)F(w_0) + \alpha^2 \nabla G(w_0)G(w_0) ).
\end{align*}

Using the definition of $F$ and $G$:
\begin{align*}
    \nabla F(w_0)F(w_0) = & \nabla_w( \nabla_w L_1(w_0) ) \nabla_w L_1(w_0) =\\
    = & \frac{\nabla_w}{2} \lVert \nabla_w L_1(w_0) \rVert^2.
\end{align*}
then,
\begin{align*}
    H_1(w) = & \frac{\alpha}{2}[ \nabla_w L_1, \nabla_w L_2 ](w_0) -\\
    & \nabla_w \left( \frac{1}{4} \lVert \nabla_w L_1(w_0) \rVert^2 + \frac{\alpha^2}{4} \lVert \nabla_w L_2(w_0) \rVert^2 \right).
\end{align*}

So the continuous modified equation is:
\begin{align*}
    \dot{w}(t) = & -\nabla_w \left( L_1(w_t) + \alpha L_2(w_t) + \frac{h}{4} \lVert \nabla_w L_1(w_t) \rVert^2 + \right. \\
    &\left. \frac{h\alpha^2}{4} \lVert \nabla_w L_2(w_t) \rVert^2 \right) + \\
    & \frac{h\alpha}{2}[ \nabla_w L_1, \nabla_w L_2 ](w_t) + {\cal O}(h^2).
\end{align*}

We can write the modified loss by substituting $L_1(w)$ and $L_2(w)$ by their definition using $L_S$ and $L_B$ to obtain,
\begin{align*}
    \tilde{L}(w) &= L \left( \frac{1}{n}\sum_{i=1}^{n} f_i(x;\theta_i), y \right) + L_B(w) + \\
    & \frac{h}{4} \lVert \nabla_{\theta_i} L\left( \frac{1}{n}\sum_{i=1}^{n} f_i(x;\theta_i), y \right) \rVert^2 +\\
    & \frac{h\alpha^2}{4} \lVert \nabla_w L_B(w) \rVert^2.
\end{align*}

Applying Jensen's inequality to $L \left( \frac{1}{n}\sum_{i=1}^{n} f_i(x;\theta_i), y \right) < \frac{1}{n} \sum_{i=1}^{n} L(\theta_i)$, where $L(\theta_i) = L_S(\theta_i)$ finalizes the proof.

\end{proof}

\scaledGradient*

\begin{proof}
We present the proof for self deep fusion of a model with itself, and the expansion to $n$ copies can be proved in the same manner. 
First, it is easy to see that when $\theta_1 = \theta_2$ we have 
$$
\nabla_{\theta_1} L_B(\theta_1, \theta_2, \eta=0) = \nabla_{\theta_2} L_B(\theta_1, \theta_2, \eta=0).
$$
This follows immediately in self deep fusion from the symmetry of the network, and the fact that the model is copied twice as is.

Next, we will show that 
\begin{align*}
\nabla_{\theta_2} L_S(\theta_2) &= \nabla_{\theta_1} L_S(\theta_1) = \nabla_{\theta_1} L_B(\theta_1, \theta_2, \eta=0) + \\ &\nabla_{\theta_2} L_B(\theta_1, \theta_2, \eta=0).
\end{align*}
To see this note that when $\eta=0$,
{\footnotesize
\begin{align}
&\nabla_{\theta_{1_j}} L_S(\theta_1) = \lim_{\epsilon \rightarrow 0}\frac{ L_S(\theta_1 + \mathbf{1}_j\cdot \epsilon)- L_S(\theta_1)}{\epsilon} \nonumber \\
&= \lim_{\epsilon \rightarrow 0} \frac{L_B(\theta_1 + \mathbf{1}_j\cdot \epsilon, \theta_2+ \mathbf{1}_j\cdot \epsilon, \eta) -L_B(\theta_1, \theta_2, \eta)  }{\epsilon} \nonumber \\
&=\lim_{\epsilon \rightarrow 0} \frac{1}{\epsilon}\biggl[L_B(\theta_1, \theta_2+ \mathbf{1}_j\cdot \epsilon, \eta) + \epsilon \nabla_{\theta_{1_j}}L_B(\theta_1, \theta_2+ \mathbf{1}_j\cdot \epsilon, \eta) \nonumber \\
&\ \ \ \ \ \ \ \ \ \ + O(\epsilon^2) - L_B(\theta_1, \theta_2, \eta)  \biggr] \nonumber \\
&=\lim_{\epsilon \rightarrow 0} \frac{1}{\epsilon}\biggl[(L_B(\theta_1, \theta_2, \eta) + \epsilon \nabla_{\theta_{2_j}}L_B(\theta_1, \theta_2, \eta) + \nonumber \\ 
&\ \ \ \ \ \ \ \ \ \ + \epsilon \nabla_{\theta_{1_j}}L_B(\theta_1, \theta_2+ \mathbf{1}_j\cdot \epsilon, \eta) + O(\epsilon^2) - L_B(\theta_1, \theta_2, \eta)  \biggr] \nonumber \\
&=\lim_{\epsilon \rightarrow 0} \frac{1}{\epsilon}\biggl[L_B(\theta_1, \theta_2, \eta) + \epsilon \nabla_{\theta_{2_j}}L_B(\theta_1, \theta_2, \eta) + \nonumber \\ 
&\ \ \ \ \ \ \ \ \ \ + \epsilon \nabla_{\theta_{1_j}}L_B(\theta_1, \theta_2, \eta) + O(\epsilon^2) - L_B(\theta_1, \theta_2, \eta) \biggr] \nonumber \\
&=\nabla_{\theta_{1_j}} L_B(\theta_1, \theta_2, \eta) + \nabla_{\theta_{2_j}}L_B(\theta_1, \theta_2, \eta) 
\end{align}
}

In the above, $\mathbf{1}_j$ is the unity vector with $j$ being the only non-zero entry, and we use Taylor expansion to move from one line to the other. Additionally, since the functions in question have Lipschitz continuity, then whatever is in the $O(\epsilon^2)$ can be bounded by a constant times $\epsilon^2$.
\end{proof}

\sameGradient*

\begin{proof}
As before we prove the lemma on fusing the same model twice for simplicity. Extending to $n$ copies can be proved in the same manner. To prove that every parameter in $\eta$ receive the same gradients as some parameter in~$\theta$ in the same layer we use the following two observations:
\begin{itemize}
    \item The hidden representation in every layer is a vector of the form $(y, y)$ for some real number vector $y$ (Fusion Property). This means that the {\it input} to every layer is of the form $(y, y)$ for some real number vector $y$.
    \item Given the above, each parameter in $\eta$ affects the prediction in same way as one of the parameters in $\theta_i$, which means adding $\epsilon$ either to that variable, or to the equivalent variable in $\theta_i$ has the same effect on the predictions.
\end{itemize}

More formally, it easy to see that:
$$
(y || y)\begin{pmatrix}\theta + \epsilon \mathbf{1}_{i,j} & \mathbf{0} \\ \mathbf{0} & \theta \end{pmatrix} = (y || y)\begin{pmatrix}\theta & \mathbf{0} \\ \mathbf{0} + \epsilon_1 \mathbf{1}_{i,j} & \theta \end{pmatrix} 
$$
where $\mathbf{1}_{i,j}$ is a matrix with zeros in every entry except entry~${i,j}$ that is equal to 1.

By definition the gradient for a parameter is the change in loss when applying $\epsilon$ change to the parameter (divided by~$\epsilon$). From the above it is easy to see that the gradient for the two parameters is equivalent as they affect the Loss in the same exact way. This together with Lemma~\ref{lemma:scaled_gradient} concludes the proof.
\end{proof}
\begin{corollary}
From Lemma~\ref{lemma:scaled_gradient} and Lemma~\ref{lemma:same_gradient} given a layer with parameters $\theta$, the fused layer gradient looks as follows  
$$
\nabla_{\theta,\theta,\eta} L_B = \begin{pmatrix} \frac{1}{2} \nabla_\theta L_s & \frac{1}{2} \nabla_\theta L_s \\ \frac{1}{2} \nabla_\theta L_s & \frac{1}{2} \nabla_\theta L_s \end{pmatrix},
$$

\end{corollary}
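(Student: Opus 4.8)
The plan is to read the four blocks of $\nabla_{\theta,\theta,\eta} L_B$ directly off Lemma~\ref{lemma:scaled_gradient} and Lemma~\ref{lemma:same_gradient}, specialized to self deep fusion with $n=2$ (the general $n$ case is identical). I would first fix the layout: the fused kernel of the layer is a $2\times 2$ array of blocks whose two diagonal blocks are the (equal) copies of the original kernel $\theta$ and whose two off-diagonal blocks are the $\eta$-parameters, both equal to $\mathbf{0}$ right after fusion. Writing $\nabla_{\theta,\theta,\eta} L_B$ evaluated at $(\theta,\theta,\eta=0)$ in the same layout, it suffices to show that each of the four blocks equals $\tfrac12\nabla_\theta L_S$.

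The two diagonal blocks are immediate: they are $\nabla_{\theta_1} L_B(\theta,\theta,\eta=0)$ and $\nabla_{\theta_2} L_B(\theta,\theta,\eta=0)$, each of which equals $\tfrac1n\nabla_\theta L_S(\theta)=\tfrac12\nabla_\theta L_S$ by Lemma~\ref{lemma:scaled_gradient}. For the two off-diagonal blocks, Lemma~\ref{lemma:same_gradient} already gives that each $\eta$-entry receives the gradient $\tfrac12\nabla_{\theta_j} L_S$ of \emph{some} parameter $\theta_j$ lying in the same layer; the remaining task is to check that this pairing is block-preserving, i.e.\ that the $(i,j)$ entry of a cross block is matched to the $(i,j)$ entry of an adjacent diagonal block, so that the cross block collapses to $\tfrac12\nabla_\theta L_S$ and not to a permuted or transposed copy of it. I would get this from the computation inside the proof of Lemma~\ref{lemma:same_gradient}: by the Fusion Property the input of the layer has the form $(y\,||\,y)$, so perturbing entry $(i,j)$ of a cross block feeds $\epsilon y_i$ into output coordinate $j$ of exactly one sub-model --- the same perturbation, with the same indices, as perturbing entry $(i,j)$ of that sub-model's diagonal $\theta$-block. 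Hence the corresponding directional derivatives of $L_B$ agree, and by the diagonal-block result the cross-block gradient is $\tfrac12\nabla_\theta L_S$ as well. Since in self fusion all copies of $\theta$ are literally equal, $\nabla_{\theta_j} L_S = \nabla_\theta L_S$ for every $j$, and assembling the four blocks gives the claimed matrix.

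The one step that needs care --- and the main obstacle --- is precisely this index alignment, because Lemma~\ref{lemma:same_gradient} as stated only asserts equality ``for some $\theta_j$''. Concretely I would invoke the identity $(y\,||\,y)\left(\begin{smallmatrix}\theta+\epsilon\mathbf{1}_{i,j} & \mathbf{0}\\ \mathbf{0} & \theta\end{smallmatrix}\right)=(y\,||\,y)\left(\begin{smallmatrix}\theta & \mathbf{0}\\ \epsilon\mathbf{1}_{i,j} & \theta\end{smallmatrix}\right)$ from that lemma's proof for the lower-left block, and the mirror identity $(y\,||\,y)\left(\begin{smallmatrix}\theta & \epsilon\mathbf{1}_{i,j}\\ \mathbf{0} & \theta\end{smallmatrix}\right)=(y\,||\,y)\left(\begin{smallmatrix}\theta & \mathbf{0}\\ \mathbf{0} & \theta+\epsilon\mathbf{1}_{i,j}\end{smallmatrix}\right)$ for the upper-right block; everything else is a direct substitution of $n=2$ into Lemmas~\ref{lemma:scaled_gradient} and~\ref{lemma:same_gradient} together with the self-fusion symmetry among the copies of $\theta$.
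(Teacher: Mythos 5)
Your proposal is correct and follows essentially the route the paper intends: the corollary is stated as an immediate consequence of Lemmas~\ref{lemma:scaled_gradient} and~\ref{lemma:same_gradient}, with the diagonal blocks read off the former and the cross blocks off the latter. In fact you go slightly further than the paper, which gives no explicit argument for the corollary: your verification that the $\eta$-to-$\theta$ pairing is block- and index-preserving (via the $(y\,||\,y)$ identity, so that perturbing entry $(i,j)$ of a cross block has the same effect as perturbing entry $(i,j)$ of the adjacent diagonal block) closes the only loose end left by the ``for some $\theta_j$'' phrasing of Lemma~\ref{lemma:same_gradient}.
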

which concludes the proof.



\begin{figure*}[t]
\centering
\includegraphics[scale=0.18]{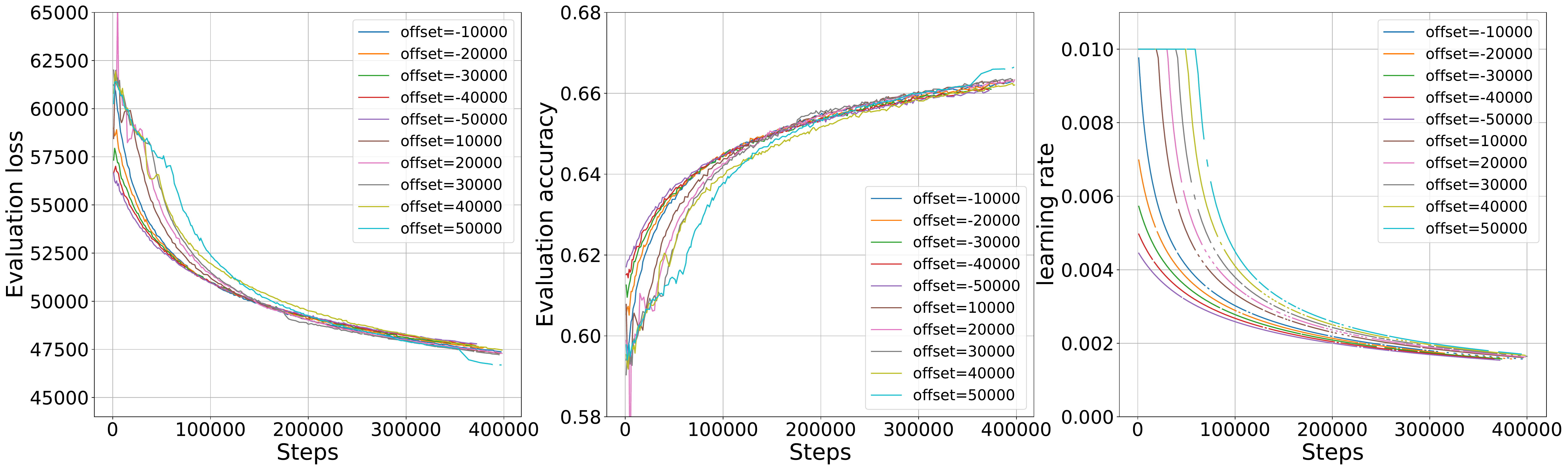}
\caption{Performance on T5 when applying offsets to the learning rate schedule.}
\label{lr_all_fig}
\end{figure*}

\nonZeroLieBracket*

\begin{proof}
Suppose we are self deep fusing a model $n$ times. Let $M$ be the manifold $M = \{ (\theta_1, \dots, \theta_n, \eta): \: \theta\in \mathbb{R}^{d_S}, \eta \in \mathbb{R}^{(d_b - d_s)}\}$; Here $d_s$, and $d_b$ are the dimension of the small and big models respectively. 
Since $\theta_i = \theta_j$ for all $i,j$, the losses $L_1$ and $L_2$ are of the form,
$$
L_1(w) =\frac 1n L_S(\theta_1) + \cdots + \frac 1n L_S(\theta_n),
$$
and,
$$
L_2(w) = L_B(\theta_1, \dots, \theta_n, \eta).
$$
The corresponding vector fields $F(w)$ and $G(w)$ are
\begin{equation*}
-\begin{pmatrix} 
\frac 1n \nabla L_S(\theta_1)  \\ 
\vdots \\ 
\frac 1n \nabla L_S(\theta_n) \\
0
\end{pmatrix} 
\ \textrm{and} \
 -\begin{pmatrix} 
\nabla_{\theta_1} L_B(\theta_1, \dots, \theta_n,\eta)  \\ 
\vdots \\ 
\nabla_{\theta_n} L_B(\theta_1, \dots, \theta_n,\eta) \\
\nabla_\eta L_B(\theta_1, \dots, \theta_n,\eta)
\end{pmatrix}.
\end{equation*}
Consider the submanifold $N\subset M $ where $$N = \{ (\theta_1, \dots, \theta_n, 0): \: \theta\in \mathbb{R}^{d_S} \ \mathrm{and}\ \theta_i = \theta_j\ \forall{i,j} \}.$$
We have the following relation between the small and big model losses on that submanifold (See Lemma \ref{lemma:same_gradient}):
\begin{equation}
\quad \nabla_{\theta_i} L_1(w)= \nabla_{\theta_i} L_2(w),
\end{equation}
for all $w \in N$. Thus, the vector fields $F$ and $G$ are related as follows:
\begin{equation*}
G(w) =  -\begin{pmatrix} 
\frac 1n \nabla L_S(\theta_1)  \\ 
\vdots \\ 
\frac 1n \nabla L_S(\theta_n) \\
\nabla_\eta L_B(\theta_1, \dots, \theta_n,\eta)
\end{pmatrix} 
=   F(w)  + g_\eta(w) 
\end{equation*}
with 
$$g_\eta = \begin{pmatrix} 
0  \\ 
\vdots \\ 
0\\
-\nabla_\eta L_B(\theta_1, \dots, \theta_n, \eta)
\end{pmatrix} 
$$
This means that the Lie bracket between $F = -\nabla_w L_1$ and $G = -\nabla_w L_2$ has the following form on $N$:
\begin{align*}
[F, G] 
& =   [F,  F + g_\eta] \nonumber \\
& =   [F, F] + [F, g_\eta] \nonumber \\
& =   [F, g_\eta] \nonumber \\
& =  
\left[
\begin{pmatrix} 
\nabla L_S(\theta_1) \\ 
\vdots \\ 
\nabla L_S(\theta_n) \\
0
\end{pmatrix} , 
\begin{pmatrix} 
0 \\ 
\vdots \\ 
0 \\
\nabla_\eta L_B(\theta_1, \dots, \theta_n, \eta)
\end{pmatrix} 
\right] \nonumber \\
& =  
\begin{pmatrix}
0 \\
\vdots \\
0 \\
(\sum_{i=1}^n \nabla_{\theta_i}\nabla_\eta L_B(w))\nabla L_S(\theta)
\end{pmatrix}
\end{align*}
which concludes the proof.
\end{proof}

\section{Model Dimensions for First Experiment}\label{apndx:first_experiment}
In this appendix, we list the dimension of the T5 transformers used in the first experiment.

\begin{table}[ht]
\centering
\begin{footnotesize}
\begin{tabular}{ccc}
\textbf{Model Name} & \textbf{T5-Small} & \textbf{T5-Medium} \\
\toprule
embedding dim & 512 & 1024 \\
number of heads & 6 & 12 \\
enc./dec. layers & 8 & 8 \\
head dim & 64 & 64 \\
mlp dimension & 1024 & 2048 \\
\midrule
number of parameters & 77M & 242M \\\bottomrule
\end{tabular}
\end{footnotesize}
\caption{Dimensions of T5 Small and Medium.}
\label{tab:models_dim1}
\end{table}


\section{Model Dimension for Second Experiment}\label{apndx:second_experiment}
In this appendix, we list the dimension of the T5 transformers used in the second experiment.
\begin{table}[!h]
\centering
\begin{footnotesize}
\begin{tabular}{cccc}
\textbf{Model Name} & \textbf{T5-S} & \textbf{T5-M} & \textbf{T5-L} \\
\toprule
embedding dim & 512 & 1024 & 2048 \\
number of heads & 6 & 12 & 24 \\
enc./dec. layers & 8 & 8 & 8\\
head dim & 128 & 128 & 128 \\
mlp dimension & 1024 & 2048 & 4096 \\
\midrule
number of parameters & 95M & 317M & 1.1B \\\bottomrule
\end{tabular}
\end{footnotesize}
\caption{Dimensions of T5-S, T5-M and T5-L.}
\label{models_dim2}
\end{table}

\begin{figure}
    \centering
    \includegraphics[scale=0.23]{high_lr_heatmaps/smaller_hlr_params_7.png}
    \caption{Self fusion parameter weight heatmap after 400K steps when using a high learning rate.}
    \label{fig:zoomin_heatmap}
\end{figure}



\newpage
\section{Offsets Experiments Details}\label{apndx:train_dynamics}
In this appendix, we show the performance given various offsets that are applied to the learning rate schedule. The results are inline with the theoretical analysis, meaning the higher the offset (maintaining higher learning rate after growth), the better the final performance. The data is in Figure~\ref{lr_all_fig}.
%

\end{document}